\let\jmlrproof\proof
\let\endjmlrproof\endproof
\let\proof\jmlrproof
\let\endproof\endjmlrproof
\let\c@theorem\relax      
\let\c@remark\relax       
\let\c@proposition\relax  
\theoremstyle{plain}
\newtheorem{theorem}{Theorem}
\newtheorem{proposition}{Proposition}
\theoremstyle{remark}
\newtheorem{remark}{Remark}
\begin{document}

\title{A Saddle Point Remedy: Power of Variable Elimination in Non-convex Optimization}

\author{\name Min Gan \email aganmin@gmail.com \\
       \addr College of Computer Science and Technology\\
       Qingdao University\\
       Qingdao 266071, China
       \AND
       \name Guang-yong Chen \email cgykeda@mail.ustc.edu.cn\\
       \addr College of Computer and Data Science\\
       Fuzhou University\\
       Fuzhou 350116, China
       \AND
       \name Yang Yi \email yiyang@yzu.edu.cn \\
       \addr College of Information Engineering\\
       Yangzhou University\\
       Yangzhou, 225000, Jiangsu, China
       \AND
       \name Jing Chen \email 8201703038@jiangnan.edu.cn\\
       \addr School of Science\\
       Jiangnan University\\
       Wuxi, 214000, Jiangsu, China
       \AND
       \name Lin Wang \email wangplanet@gmail.com\\
       \addr Shandong Provincial Key Laboratory of Network Based Intelligent Computing\\
       University of Jinan \\
       Jinan 250022, China
       }

\editor{My editor}

\maketitle

\begin{abstract}%   <- trailing '%' for backward compatibility of .sty file
The proliferation of saddle points, rather than poor local minima, is increasingly understood to be a primary obstacle in large-scale non-convex optimization for machine learning. Variable elimination algorithms, like Variable Projection (VarPro), have long been observed to exhibit superior convergence and robustness in practice, yet a principled understanding of why they so effectively navigate these complex energy landscapes has remained elusive. In this work, we provide a rigorous geometric explanation by comparing the optimization landscapes of the original and reduced formulations.  Through a rigorous analysis based on Hessian inertia and the Schur complement, we prove that variable elimination fundamentally reshapes the critical point structure of the objective function, revealing that local maxima in the reduced landscape are created from, and correspond directly to, saddle points in the original formulation. Our findings are illustrated on the canonical problem of non-convex matrix factorization, visualized directly on two-parameter neural networks, and finally validated in training deep Residual Networks, where our approach yields dramatic improvements in stability and convergence to superior minima. This work goes beyond explaining an existing method; it establishes landscape simplification via saddle point transformation as a powerful principle that can guide the design of a new generation of more robust and efficient optimization algorithms.
\end{abstract}

\begin{keywords}
  Non-convex Optimization, Variable Projection, Saddle Points, Optimization Landscape, Deep Learning
\end{keywords}

\section{Introduction}
Machine learning algorithms train their models and perform inference by solving optimization problems. Many of these optimization problems have a separable structure, wherein the decision variables can be partitioned into distinct groups--an insight that can be leveraged to enhance algorithmic efficiency. A common special case is when one set of variables enters the objective in a simple (often linear or quadratic) way, while the remaining variables enter nonlinearly. In these scenarios, it is often possible to eliminate the simple variables by solving for their optimal values in closed form as functions of the other variables. This technique goes by various names, including variable projection \citep{golub:03, newman:21}, Wiberg algorithm \citep{okatani:07, strelow:15}, partial minimization \citep{aravkin:17}, or taking a Schur complement of the problem \citep{demmel:21, weber:23}. The central idea is to reduce the original problem in \emph{both dimension and complexity} by analytically optimizing over one block of variables, yielding a smaller optimization problem in the remaining variables only.

 Let us begin by formally defining the class of separable optimization problems under consideration, followed by a simple but illustrative example. Specifically, we focus on unconstrained optimization problems of the form

\begin{equation}
    \min_{\boldsymbol{\theta} \in \mathbb{R}^n} f(\boldsymbol{\theta}),
\end{equation}
where $f: \Re^n \to \Re$ is the objective function, and the decision variable \(\boldsymbol{\theta}\) can be naturally partitioned as
\begin{equation}
    \boldsymbol{\theta} = \begin{pmatrix}
        \boldsymbol{\alpha} \\
        \boldsymbol{\beta}
    \end{pmatrix}, \boldsymbol{\alpha} \in \mathbb{R}^p, \boldsymbol{\beta} \in \mathbb{R}^q, p+q = n
\end{equation}
in such a way that solving the subproblem
\begin{equation}
    \min_{\boldsymbol{\beta}} f(\boldsymbol{\alpha}, \boldsymbol{\beta})
\end{equation}
is analytically or numerically straightforward for every fixed $\boldsymbol{\alpha}$ in the domain under consideration.

Let ${\boldsymbol\beta}(\boldsymbol{\alpha})$ denote a solution to the subproblem (3). By substituting back into the original objective function, we can reformulate the problem as
\begin{equation}
    \min_{\boldsymbol{\alpha} \in \mathbb{R}^q} \varphi(\boldsymbol{\alpha}),
\end{equation}
where the reduced objective function $\varphi(\boldsymbol{\alpha})$ is defined as
\begin{equation}\nonumber
\varphi(\boldsymbol{\alpha}) = f(\boldsymbol{\alpha},\boldsymbol{\beta}(\boldsymbol{\alpha})).
\end{equation} 

This reformulation effectively eliminates the variable \({\boldsymbol\beta}\),  reducing the original $n$-dimensional optimization problem to a $p$-dimensional one, at the cost of solving a $q$ dimensional minimization problem (3). In the following, we refer to algorithms that simultaneously optimize all parameters in (1) without considering the problem's structure as the joint method, and algorithms applied to (4) as the variable elimination method.

We illustrate the variable elimination approach using the two-dimensional Rosenbrock function [\cite{wiki:rosenbrock}], defined as
\begin{equation}
    f(x,y) = (1-x)^2 + 100(y-x^2)^2.
\end{equation}

This function is well known for its narrow, curved valley leading to the global minimum at $(x,y) = (1,1)$. While it is straightforward to locate the valley, convergence to the global minimum along this valley is notoriously slow due to the flat curvature in the $x$-direction and steep curvature in the $y$-direction. However, for any fixed $x$, the optimal $y$ is exactly $x^2$, which allows us to eliminate $y$ and formulate the reduced objective function as 
\begin{equation}
    \varphi(x) = (1-x)^2.
\end{equation}

\begin{figure}[!t]
    \centering
    \includegraphics[height=0.5\textwidth]{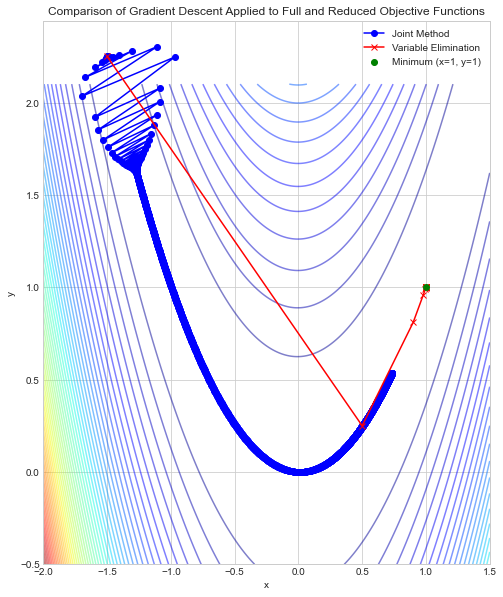}
    \caption{Optimization paths of gradient descent applied to the original Rosenbrock function and the reduced function from the same initial point (-1.5, 2.25). The blue path represents the conventional approach, which exhibits certain oscillations and twists at the beginning and then creeps along the flat valley. The red path represents the variable elimination approach, which quickly reaches the minimum in only a few steps.}
\end{figure}

Fig. 1 shows the trajectories of the same vanilla gradient descent (VGD) algorithm applied to the full objective function (5) and the reduced one (6). In the figure, the blue path shows that VGD applied to the full objective function (5) makes many tiny steps along the curve toward the minimum, wasteing iterations "zigzagging" or creeping along the flat valley. For a stable step size (about $10^{-3}$ here), it required thousands of iterations to approach the minimum. Any larger step size causes the method to overshoot or even diverge due to the valley's steep sides. In the variable elimination approach, we analytically eliminate $y$ by setting $y=x^2$. This reduces the problem to (6), a simple one-dimensional quadratic. The red path in the figure shows that the algorithm stays exactly on the valley floor at all times. In our example, a step size of $\alpha=0.4$ moves $x$ from $-1.5$ all the way to $0.5$ in one step (red square in the figure), and reaches near $x=1$ in only a few iterations. Because the reduced function $(x-1)^2$ is well-behaved, the variable elimination approach converges to $(1,1)$ in just a handful of iterations. This method is dramatically faster in this case, since it never has to "feel out" the correct $y$ --- it always stays in the optimal subspace for $y$. Another important difference between the unseparated and separated approach is robustness to step size. Standard gradient descent on Rosenbrock's function is extremely sensitive to the choice of step size (learning rate). The Rosenbrock valley has steep curvature in the $y$ direction and shallow curvature along $x$, leading to a very high condition number for the Hessian. By contrast, the variable elimination reduces the problem to one dimension with curvature $\varphi''(x)=2$ everywhere (condition number $1$). This well-conditioned problem tolerates much larger steps (up to $\alpha<1$ for convergence in theory).

\begin{figure}[!t]
    \centering
    \includegraphics[height=0.35\textwidth]{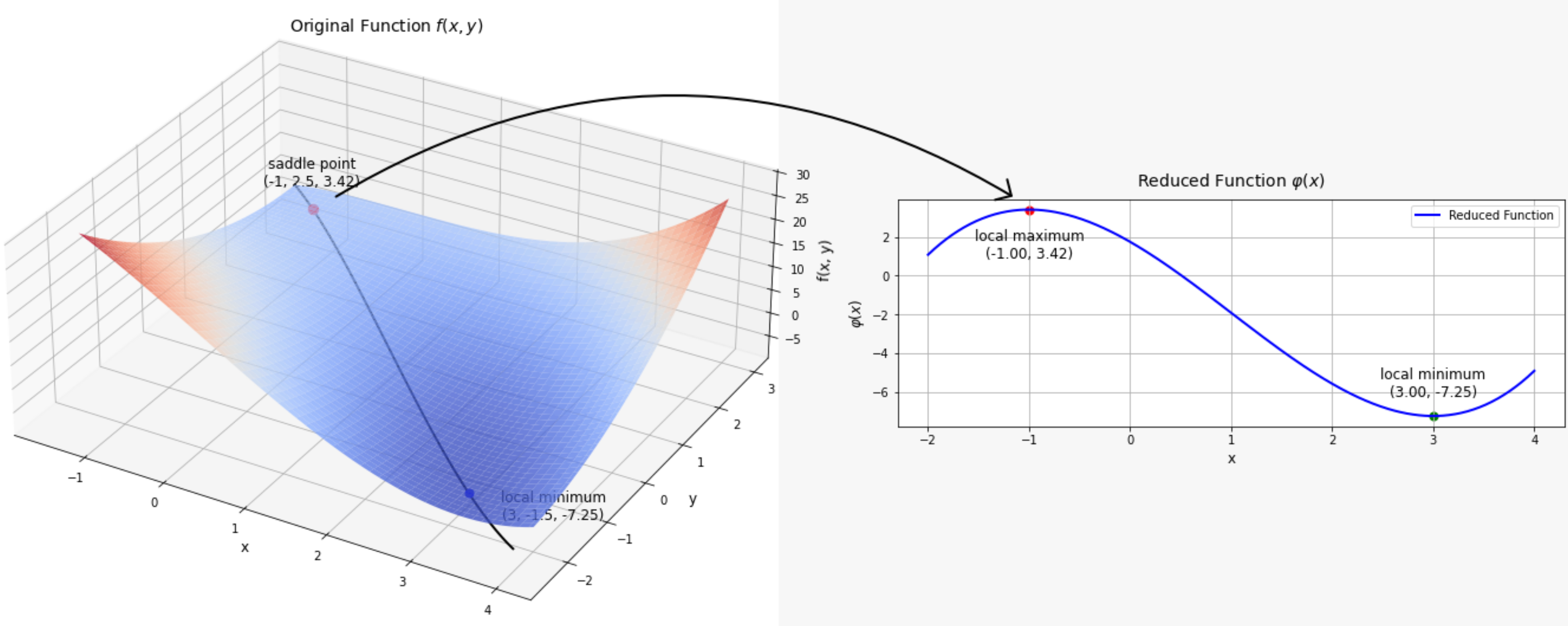}
    \caption{Variable elimination turns a saddle point to a local maximum. Left: the original function $f(x,y)=\frac{1}{3}x^3 + y^2 + 2xy - 6x - 3y + 4$ which has a saddle point and a local minimum. Right: the reduced function $\varphi(x) = \frac{1}{3}x^3 - x^2 - 3x + \frac{7}{4}$ which has a local maximum and a local minimum. The variable elimination approach search along the black line in the original space and is reflect in the left figure by blue line.}
\end{figure}

The Rosenbrock example clearly illustrates the practical advantages of the variable elimination approach, but it is important to recognize that such dramatic success should not be expected in all cases, as real-world problems are often far more complex. While the dramatic speedup on the Rosenbrock function is illustrative, the benefits of variable elimination extend far beyond well-conditioned subproblems. A wealth of numerical experiments has shown that optimization algorithms applied to the reduced problem not only converge substantially faster but also frequently find solutions of superior quality. This empirical success raises a deeper, more fundamental question that has largely remained unanswered:\\

 \textit{Why are variable elimination algorithms more likely to find better minima compared to methods that optimize the full-dimensional problem? }\\
 
 This question serves as the core motivation for this research, as we aim to uncover the underlying mechanisms and conditions that allow variable elimination to outperform conventional approaches in complex, non-convex optimization tasks. To build intuition towards an answer, let us consider a second example that directly probes the geometry of the optimization landscape. Consider the illustrative function
\begin{equation}
   f(x,y)=\frac{1}{3}x^3 + y^2 + 2xy - 6x - 3y + 4.
\end{equation}

A detailed graphical and analytical examination (see Fig. 2) reveals a notable saddle point at $(-1, 2.5)$ and a local minimum at $(3, -1.5)$. By employing the variable elimination approach - specifically substituting the constraint $y = \frac{3-2x}{2}$ into the original function - we obtain a reduced one-dimensional function
\begin{equation}
    \varphi(x) = \frac{1}{3}x^3 - x^2 - 3x + \frac{7}{4}.
\end{equation}

Remarkably, under this dimension reduction, the saddle point transforms into a local maximum. This example succinctly demonstrates the advantage of the variable elimination method: it reshapes the optimization landscape, effectively removing saddle points and converting them into local extrema that can be more reliably identified and navigated by optimization algorithms. Such insight underscores the potential of variable elimination strategies in simplifying complex optimization landscapes, enhancing both the convergence properties and robustness of optimization algorithms.

Throughout this paper a lower-case letter in boldface will indicate a column vector, while the same letter with a subscript will indicate a component of the vector. Upper case stands for matrices. The transpose of the matrix $\boldsymbol{A}$ is denoted by $\boldsymbol{A}^T$. We use the Euclidean vector norm
\begin{equation}\nonumber
\|\boldsymbol{x}\|_2 = (\boldsymbol{x}^T \boldsymbol{x})^\frac{1}{2}=(\Sigma x_i^2)^{\frac{1}{2}}.
\end{equation} 

\section{Related Work and Analysis}
 Variable elimination techniques have their roots in the seminal work by \cite{golub:73} in the 1970s. They studied a class of separable nonlinear least squares (SNLLS) problems and introduced the \textit{variable projection} (VarPro) method, leveraging the differentiation of pseudoinverses to eliminate linear variables analytically. The SNLLS problem is prevalent across various applications in different fields, as many inverse problems can be interpreted as nonlinear data fitting tasks.  In such problems, the residual vector depends linearly on a subset of parameters and nonlinearly on the others. Its general form can be expressed as follows:
\begin{equation}
    \min_{\boldsymbol{x} \in \mathbb{R}^p, \boldsymbol{y} \in \mathbb{R}^q} \frac{1}{2}\|\boldsymbol\epsilon(\boldsymbol{x}, \boldsymbol{y})\|_2^2 = \frac{1}{2}\|\boldsymbol G(\boldsymbol{x})\boldsymbol{y} - \boldsymbol z\|_2^2
\end{equation}
where $\boldsymbol{x}$ denotes the nonlinear parameters, $\boldsymbol{y}$ the linear parameters, 
$\boldsymbol{G}(\boldsymbol{x}) \in \mathbb{R}^{m \times q}$ a matrix-valued function of $\boldsymbol{x}$, and $\boldsymbol{z} \in \mathbb{R}^m$ the observed data. The key insight of VarPro is to eliminate $\boldsymbol{y}$ by solving the linear least squares subproblem:
\begin{equation}
  \boldsymbol{y}^\star(\boldsymbol{x}) = \arg\min_{\boldsymbol{y}} \|\boldsymbol{G}(\boldsymbol{x})\boldsymbol{y} - \boldsymbol{z}\|_2^2,
\end{equation}
which has the closed-form solution $\boldsymbol{y}^\star(\boldsymbol{x}) = \boldsymbol{G}(\boldsymbol{x})^{\dagger} \boldsymbol{z}$, where $\boldsymbol{G}(\boldsymbol{x})^{\dagger}$ denotes the Moore-Penrose pseudoinverse. Substituting this back into the original objective yields the \textit{reduced function}:
\begin{equation}
    \varphi(\boldsymbol{x}) = \frac{1}{2} \| \boldsymbol{G}(\boldsymbol{x}) \boldsymbol{G}(\boldsymbol{x})^{\dagger} \boldsymbol{z} - \boldsymbol{z} \|_2^2,
\end{equation}
which depends only on the nonlinear parameters $\boldsymbol{x}$. 
 
 Since then, the algorithm has undergone extensive development and refinement \citep{kaufman:75, ruhe:80, mullen:08, chung:10, aravkin:12, oleary:13, shearer:13, aravkin:17, espanol:25, chen:25}. One of the earliest and most influential follow-up studies was conducted by \citet{ruhe:80}, who analyzed the asymptotic convergence of both separated algorithms (VarPro) and unseparated algorithm (where all variables are optimized together in a joint fashion). They concluded that both methods have comparable convergence rates in the asymptotic limit. However, it is important to note the context of \citet{ruhe:80}'s conclusion. Their analysis was confined to the local (asymptotic) regime and considered essentially a single Gauss-Newton step near the optimum. In other words, their study looked at the idealized final behavior of the algorithms, but did not address the global optimization behavior or the landscape properties that dictate performance far from the optimum. It is precisely this global behavior that has driven VarPro's adoption in challenging real-world applications. 
 
 Beyond the core algorithmic advances, the variable projection idea has been widely applied in various domains, with significant success in machine learning and computer vision \citep{sjoberg:97, pereyra:06, okatani:07, okatani:11, eriksson:12, strelow:15, hong:17, gan:18, chen:19, demmel:21, newman:21, chen:21, newman:22, weber:23}. An early example is Wiberg's algorithm \cite{wiberg:76}, which applied VarPro to low-rank matrix factorization (principal component analysis with missing data). Despite its promise, this approach was largely overlooked for decades; many computer vision methods in the 1990s-2000s relied on alternating least-squares or joint optimization, which often suffered from poor convergence behavior. Renewed interest in VarPro arose in the 2000s when researchers recognized its advantages for challenging optimization problems. For instance, \cite{okatani:07} revisited Wiberg's method and demonstrated its strong convergence properties for structure-from-motion and matrix completion tasks, vastly outperforming standard joint optimization methods. Subsequent work by \cite{okatani:11} combined VarPro with damping strategies (in a Levenberg-Marquardt framework) to boost convergence rates to near 100\% on previously difficult factorization problems.
  
 Most recently, \cite{hong:17} revisited the VarPro method, seeking to explain its superior performance on ill-conditioned problems like affine bundle adjustment. They argue that under ill-conditioned circumstances, a joint optimization approach yields a much smaller update to the linear parameters ($\Delta \boldsymbol{y}_{\text{Joint}}$) compared to the update from the variable projection method ($\Delta \boldsymbol{y}_{\text{VarPro}}$). In their paper, they present the VarPro update formula to support this claim:
 \begin{equation}
   \Delta \boldsymbol{y}_{\text{VarPro}} = - \boldsymbol{J}_{\boldsymbol{y}}(\boldsymbol{x}_k)^{\dagger}(\boldsymbol\epsilon(\boldsymbol{x}_k,\boldsymbol{y}_k )+\boldsymbol{J}_{\boldsymbol{x}}(\boldsymbol{x}_k,\boldsymbol{y}_k )\Delta \boldsymbol{x})
 \end{equation}
 where $\boldsymbol{J}_{\boldsymbol{x}}(\boldsymbol{x}, \boldsymbol{y})=\partial \boldsymbol\epsilon(\boldsymbol{x}, \boldsymbol{y})/\partial \boldsymbol{x}$ and $\boldsymbol{J}_{\boldsymbol{y}}(\boldsymbol{x})=\partial \boldsymbol\epsilon(\boldsymbol{x}, \boldsymbol{y})/{\partial \boldsymbol{y}}$  the Jacobian matrices with respect to $\boldsymbol{x}$ and $\boldsymbol{y}$, respectively.
 
 However, we note that their expression is an approximation (see Appendix A). The accurate $\Delta \boldsymbol{y}_{\text{VarPro}}$ can be derived using Newton's method (see Appendix A), which gives 
 \begin{equation}
    \Delta \boldsymbol{y}_{\text{VarPro}} = - \boldsymbol{J}_{\boldsymbol{y}}(\boldsymbol{x}_{k+1})^{\dagger}\boldsymbol\epsilon(\boldsymbol{x}_{k+1},\boldsymbol{y}_k ). 
 \end{equation}
 If the accurate expression is used, the dependency on the updated parameter $\boldsymbol{x}_{k+1}$ fundamentally complicates a direct comparison with $\Delta \boldsymbol{y}_{\text{Joint}}$.
 
 More importantly, since the search paths of the joint and VarPro methods diverge after the first step, the state $(\boldsymbol{x}_k,\boldsymbol{y}_k)$ is different for each algorithm at any given iteration $k$. Therefore, a direct comparison of update magnitudes from these different states does not provide a meaningful basis for judging overall algorithmic performance. This highlights the need for an analysis rooted in the intrinsic properties of the optimization landscapes themselves, rather than in the behavior of specific update steps, which is the central goal of our work.

\section{Main results}
While asymptotic analyses suggest similar local convergence rates, modern research has demonstrated that VarPro and joint optimization can follow vastly different optimization trajectories. These path differences are often critical in determining whether an algorithm finds a high-quality solution, especially on non-convex landscapes. Empirically, VarPro-based methods have exhibited a significantly larger basin of convergence; they often converge to a global or near-global optimum from random initializations where standard joint methods fail. For instance, in bundle adjustment and low-rank matrix factorization problems (which belong to classic SNLLS scenarios), a straightforward joint optimization frequently stalls or converges to a poor point when initiated far from the solution. In contrast, VarPro, which eliminates a subset of variables at each iteration, has been observed to find the best-known optimum from random starts in these same problems. 

This section illuminates the reasons for this performance difference by exploring how the optimization landscape is transformed by variable elimination. Specifically, we analyze the relationship between the stationary points of the original and reduced problems. While some aspects of this relationship for SNLLS were explored by \cite{golub:73}, we extend this analysis to a broader class of generalized separable optimization problems.

\subsection{Foundations: Mapping Stationary Points}
Consider a general separable optimization problem:
\begin{equation}
    \min_{(\boldsymbol{x},\boldsymbol{y}) \in \mathbb{R}^p 
    \times \mathbb{R}^q} F(\boldsymbol{x}, \boldsymbol{y})
\end{equation}
where $F: \mathbb{R}^p \times \mathbb{R}^q \to \mathbb{R}$ is assumed to be a continuously differentiable function. This problem is \textit{separable} in the sense that for any fixed $\boldsymbol{x}$ in the domain under consideration, the minimization subproblem with respect to $\boldsymbol{y}$ is tractable and can be solved efficiently, either analytically or numerically.. This structure is the core of the VarPro methodology. This general form captures SNLLS as a special case where $F(\boldsymbol{x},\boldsymbol{y})$ has a specific structure, but also allows for more complex nonlinear dependencies of $\boldsymbol{y}$ within the objective function.

For a fixed $\boldsymbol{x} \in \mathbb{R}^p $, the \textit{inner minimization problem} is defined as:
\begin{equation}
    \min_{\boldsymbol{y} \in \mathbb{R}^q} F(\boldsymbol{x}, \boldsymbol{y}).
\end{equation}
Let $\boldsymbol{y}^{*}(\boldsymbol{x})$ denote a solution to this inner problem, which maps each $\boldsymbol{x}$ to a corresponding optimal choice of $\boldsymbol{y}$. The "elimination of linear variables" in VarPro is conceptually equivalent to solving this inner problem for $\boldsymbol{y}$ given $\boldsymbol{x}$.

With $\boldsymbol{y}^{*}(\boldsymbol{x})$ defined and assumed to be sufficiently smooth, the \textit{reduced optimization problem} is formulated by substituting $\boldsymbol{y}^{*}(\boldsymbol{x})$ back into the original objective function:
\begin{equation}
    \min_{\boldsymbol{x} \in \mathbb{R}^p} \tilde F(\boldsymbol{x}) := F(\boldsymbol{x}, \boldsymbol{y}^{*}(\boldsymbol{x})).
\end{equation}

\vspace{0.5cm}
\textbf{Our first question is:} \textit{What is the relationship between the stationary points of the original problem and those of the reduced problem when the variable elimination method is applied?}
\vspace{0.5cm}

The existence and uniqueness of $\boldsymbol{y}^{*}(\boldsymbol{x})$ are paramount for the reduced function $\tilde F(\boldsymbol{x})$ to be well-defined. If $\boldsymbol{y}^{*}(\boldsymbol{x})$ is not unique for some $\boldsymbol{x}$, the definition of $\tilde F(\boldsymbol{x})$ becomes ambiguous, potentially leading to a set-valued function or requiring a specific selection rule for $\boldsymbol{y}^{*}(\boldsymbol{x})$, which might complicate differentiability. Key properties of $\boldsymbol{y}^{*}(\boldsymbol{x})$ are outlined below.
 
 \textbf{Existence and Uniqueness:} For each $\boldsymbol{x}$, the existence of a solution $\boldsymbol{y}^{*}(\boldsymbol{x})$ to the inner problem can be ensured if $F(\boldsymbol{x},\cdot)$ is coercive (i.e., $F(\boldsymbol{x},\boldsymbol{y}) \to \infty$ as $\|\boldsymbol{y} \| \to \infty)$ and lower semi-continuous. Uniqueness is typically guaranteed if $F(\boldsymbol{x},\cdot)$ is strictly convex in $\boldsymbol{y}$ for all $\boldsymbol{x}$. These conditions ensure that for every $\boldsymbol{x}$, there is a unique optimal $\boldsymbol{y}$ value that minimizes $F(\boldsymbol{x},\boldsymbol{y})$.
 
 \textbf{Differentiability via Implicit Function Theorem (IFT):} The differentiability of $\boldsymbol{y}^{*}(\boldsymbol{x})$ is a cornerstone for deriving the optimality conditions of the reduced problem. If $F(\boldsymbol{x},\boldsymbol{y})$ is sufficiently smooth (e.g., $C^2$), and the inner problem satisfies certain regularity conditions, $\boldsymbol{y}^{*}(\boldsymbol{x})$ can be shown to be a $C^1$ function. The function $\boldsymbol{y}^{*}(\boldsymbol{x})$ may be not given by an explicit formula, but is defined implicitly as the solution to the first-order optimality condition of the inner problem:
\begin{equation}\nonumber
 \nabla _{\boldsymbol{y}} F(\boldsymbol{x}, \boldsymbol{y}^{*}(\boldsymbol{x})) = 0.
 \end{equation}
The Implicit Function Theorem (IFT) provides the precise conditions under which this equation guarantees that $\boldsymbol{y}^{*}$ is a unique and smoothly differentiable function of $\boldsymbol{x}$. The single most important condition is that the Hessian of the inner problem, $\nabla_{\boldsymbol{y}\boldsymbol{y}}^2 F$, must be invertible (non-singular) at the solution point $(\boldsymbol{x}, \boldsymbol{y}^{*}(\boldsymbol{x}))$.

Geometrically, an invertible Hessian means that the inner problem has a non-zero curvature at its minimum. The minimum isn't "flat"; it sits at the bottom of a well-defined bowl.  This "strong" minimum is stable: if you slightly change $\boldsymbol{x}$, the bottom of the bowl will shift smoothly. This smooth shifting of the minimum is what makes the function $\boldsymbol{y}^{*}(\boldsymbol{x})$ differentiable.
  
  %\vspace{0.5cm}
  \begin{theorem}[Critical Points Preservation (Reduced $\Rightarrow$ Original)] 
  Let $F: \mathbb{R}^p \times \mathbb{R}^q \to \mathbb{R}$ be a $C^1$ function. Assume that for each $\boldsymbol{x} \in \mathbb{R}^p$, the inner problem $\min_{\boldsymbol{y} \in \mathbb{R}^q} F(\boldsymbol{x},\boldsymbol{y})$ has a unique solution $\boldsymbol{y}^{*}(\boldsymbol{x})$, and that $\boldsymbol{y}^{*}(\boldsymbol{x})$ is a $C^1$ function. If $\boldsymbol{x}^*$ is a critical point of the reduced problem $\tilde F(\boldsymbol{x})$, then $(\boldsymbol{x}^*, \boldsymbol{y}^{*}(\boldsymbol{x}^*))$ is a critical point of the original problem $F(\boldsymbol{x},\boldsymbol{y})$.
  \end{theorem}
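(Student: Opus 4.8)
The plan is to verify the two blocks of the stationarity condition $\nabla F(\boldsymbol{x}^*,\boldsymbol{y}^*(\boldsymbol{x}^*))=\boldsymbol{0}$ separately, namely $\nabla_{\boldsymbol{y}}F=\boldsymbol{0}$ and $\nabla_{\boldsymbol{x}}F=\boldsymbol{0}$ at the point $(\boldsymbol{x}^*,\boldsymbol{y}^*(\boldsymbol{x}^*))$. The central tool is the chain rule applied to $\tilde F(\boldsymbol{x})=F(\boldsymbol{x},\boldsymbol{y}^*(\boldsymbol{x}))$, together with the first-order optimality of the inner problem. The key mechanism, an envelope-theorem cancellation, is that the contribution of the sensitivity $\partial\boldsymbol{y}^*/\partial\boldsymbol{x}$ to $\nabla\tilde F$ is annihilated by the vanishing inner gradient, so we never need an explicit formula for $\boldsymbol{y}^*$.

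First I would establish the $\boldsymbol{y}$-block as an identity holding for every $\boldsymbol{x}$, not merely at $\boldsymbol{x}^*$. Since $\boldsymbol{y}^*(\boldsymbol{x})$ is by hypothesis the unique minimizer of the $C^1$ map $\boldsymbol{y}\mapsto F(\boldsymbol{x},\boldsymbol{y})$ over the open set $\mathbb{R}^q$, the unconstrained first-order optimality condition gives
\begin{equation}\nonumber
\nabla_{\boldsymbol{y}}F(\boldsymbol{x},\boldsymbol{y}^*(\boldsymbol{x}))=\boldsymbol{0}\qquad\text{for all }\boldsymbol{x}\in\mathbb{R}^p.
\end{equation}
In particular it holds at $\boldsymbol{x}^*$, which already disposes of the $\boldsymbol{y}$-component of the gradient of $F$.

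Next I would treat the $\boldsymbol{x}$-block. Because $F$ is $C^1$ and $\boldsymbol{y}^*$ is assumed $C^1$, the composition $\tilde F$ is differentiable, and the chain rule yields
\begin{equation}\nonumber
\nabla\tilde F(\boldsymbol{x})=\nabla_{\boldsymbol{x}}F(\boldsymbol{x},\boldsymbol{y}^*(\boldsymbol{x}))+\Big(\fracpartial{\boldsymbol{y}^*}{\boldsymbol{x}}\Big)^{T}\nabla_{\boldsymbol{y}}F(\boldsymbol{x},\boldsymbol{y}^*(\boldsymbol{x})).
\end{equation}
The second summand vanishes identically by the inner-optimality identity just established, so $\nabla\tilde F(\boldsymbol{x})=\nabla_{\boldsymbol{x}}F(\boldsymbol{x},\boldsymbol{y}^*(\boldsymbol{x}))$ for all $\boldsymbol{x}$. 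Evaluating at the critical point $\boldsymbol{x}^*$ of the reduced problem, the hypothesis $\nabla\tilde F(\boldsymbol{x}^*)=\boldsymbol{0}$ forces $\nabla_{\boldsymbol{x}}F(\boldsymbol{x}^*,\boldsymbol{y}^*(\boldsymbol{x}^*))=\boldsymbol{0}$. Combining the two blocks gives $\nabla F(\boldsymbol{x}^*,\boldsymbol{y}^*(\boldsymbol{x}^*))=\boldsymbol{0}$, the desired conclusion.

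I do not expect a serious obstacle here; the content is the cancellation of the Jacobian term rather than any delicate estimate. The one point requiring care is the legitimacy of the chain rule, which rests precisely on the two standing hypotheses ($F\in C^1$ and $\boldsymbol{y}^*\in C^1$); it is worth emphasizing that differentiability of $\boldsymbol{y}^*$ is assumed outright, so no invocation of the Implicit Function Theorem or invertibility of $\nabla^2_{\boldsymbol{y}\boldsymbol{y}}F$ is needed for this direction. The genuinely harder converse---that critical points of $F$ of the form $(\boldsymbol{x}^*,\boldsymbol{y}^*(\boldsymbol{x}^*))$ correspond back to critical points of $\tilde F$, together with the finer correspondence of their Hessian inertia---would require the Schur-complement and second-order machinery flagged in the introduction, and I would expect it to be treated in a separate statement.
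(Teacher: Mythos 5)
Your proposal is correct and follows essentially the same route as the paper's own proof: the chain rule applied to $\tilde F(\boldsymbol{x})=F(\boldsymbol{x},\boldsymbol{y}^*(\boldsymbol{x}))$, cancellation of the sensitivity term via the inner first-order optimality condition $\nabla_{\boldsymbol{y}}F(\boldsymbol{x},\boldsymbol{y}^*(\boldsymbol{x}))=\boldsymbol{0}$, and evaluation at $\boldsymbol{x}^*$ to conclude that both partial gradients vanish. Your explicit remark that no Implicit Function Theorem or invertibility of $\nabla^2_{\boldsymbol{y}\boldsymbol{y}}F$ is needed (since differentiability of $\boldsymbol{y}^*$ is assumed outright) is a nice clarification consistent with, though not spelled out in, the paper's argument.
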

  
  \begin{proof}
Since is $\boldsymbol{x}^*$ a critical point of $\tilde F(\boldsymbol{x})$, we have: 
 \begin{equation}\nonumber
    \nabla \tilde F(\boldsymbol{x}^*) = 0.
\end{equation}

  The reduced function is defined as $\tilde F(\boldsymbol{x}) = F(\boldsymbol{x}, \boldsymbol{y}^{*}(\boldsymbol{x}))$ . By the chain rule, the gradient of $\tilde F(\boldsymbol{x})$ is given by:
 \begin{equation}\nonumber
    \nabla \tilde F(\boldsymbol{x}) = \nabla_{\boldsymbol{x}} F(\boldsymbol{x}, \boldsymbol{y}^{*}(\boldsymbol{x})) + \Bigg(\frac{\mathrm{d} \boldsymbol{y}^{*}(\boldsymbol{x})}{\mathrm{d} \boldsymbol{x}}\Bigg)^T \nabla_{\boldsymbol{y}} F(\boldsymbol{x}, \boldsymbol{y}^{*}(\boldsymbol{x})).
\end{equation}
By definition, $\boldsymbol{y}^{*}(\boldsymbol{x})$ is a minimizer of the inner problem, so it must satisfy the first-order optimality condition $\nabla_{\boldsymbol{y}} F(\boldsymbol{x}, \boldsymbol{y}^{*}(\boldsymbol{x})) = 0$. Therefore, the second term vanishes, and the gradient simplifies to:
\begin{equation}\nonumber
    \nabla \tilde F(\boldsymbol{x}) = \nabla_{\boldsymbol{x}} F(\boldsymbol{x}, \boldsymbol{y}^{*}(\boldsymbol{x})).
\end{equation}

 Evaluating this expression at $\boldsymbol{x}^*$, we get:
 \begin{equation}
    \nabla_{\boldsymbol{x}} F(\boldsymbol{x}^*, \boldsymbol{y}^{*}(\boldsymbol{x}^*)) = \nabla \tilde F(\boldsymbol{x}^*) =\boldsymbol 0.
 \end{equation}

 Since $\boldsymbol{y}^{*}(\boldsymbol{x}^*)$ is the solution to the inner problem $\min_{\boldsymbol{y} \in \mathbb{R}^q} F(\boldsymbol{x}^*,\boldsymbol{y})$, it must satisfy:
 \begin{equation}
    \nabla_{\boldsymbol{y}} F(\boldsymbol{x}^*, \boldsymbol{y}^{*}(\boldsymbol{x}^*))  =\boldsymbol 0.
 \end{equation}
 It follows from (17) and (18) that  $(\boldsymbol{x}^*, \boldsymbol{y}^{*}(\boldsymbol{x}^*))$ is, by definition, a critical point of the original problem $F(\boldsymbol{x},\boldsymbol{y})$.
 \end{proof}
 
 \vspace{0.5cm}
 \begin{theorem}[Global Minimizers Preservation (Reduced $\Rightarrow$ Original)]
  Let $F: \mathbb{R}^p \times \mathbb{R}^q \to \mathbb{R}$ be a continuous function. Assume that for each $\boldsymbol{x} \in \mathbb{R}^p$, the inner problem $\min_{\boldsymbol{y} \in \mathbb{R}^q} F(\boldsymbol{x},\boldsymbol{y})$ has a solution $\boldsymbol{y}^{*}(\boldsymbol{x})$. If $\boldsymbol{x}^*$ is a global minimizer of the reduced problem $\tilde F(\boldsymbol{x})$, then $(\boldsymbol{x}^*, \boldsymbol{y}^{*}(\boldsymbol{x}^*))$ is a global minimizer of the original objective $F(\boldsymbol{x},\boldsymbol{y})$.
\end{theorem}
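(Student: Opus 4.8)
The proof will rest entirely on the defining property of $\boldsymbol{y}^{*}(\boldsymbol{x})$ as a minimizer of the inner problem, and will require no differentiability at all---only the stated continuity and the assumed existence of an inner solution. The key observation is that, because $\boldsymbol{y}^{*}(\boldsymbol{x})$ solves $\min_{\boldsymbol{y}} F(\boldsymbol{x},\boldsymbol{y})$, the reduced objective equals the pointwise minimum of $F$ over $\boldsymbol{y}$; that is, $\tilde F(\boldsymbol{x}) = F(\boldsymbol{x},\boldsymbol{y}^{*}(\boldsymbol{x})) \le F(\boldsymbol{x},\boldsymbol{y})$ for every $\boldsymbol{x}$ and every $\boldsymbol{y}$. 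First I would record this inequality, which serves as the workhorse of the entire argument.

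\textbf{Main step.} Fix an arbitrary point $(\boldsymbol{x},\boldsymbol{y}) \in \mathbb{R}^p \times \mathbb{R}^q$; the goal is to show $F(\boldsymbol{x}^*,\boldsymbol{y}^{*}(\boldsymbol{x}^*)) \le F(\boldsymbol{x},\boldsymbol{y})$. I would chain three facts in order: first, by definition $F(\boldsymbol{x}^*,\boldsymbol{y}^{*}(\boldsymbol{x}^*)) = \tilde F(\boldsymbol{x}^*)$; second, since $\boldsymbol{x}^*$ is a global minimizer of $\tilde F$, we have $\tilde F(\boldsymbol{x}^*) \le \tilde F(\boldsymbol{x})$; and third, applying the workhorse inequality at $\boldsymbol{x}$ gives $\tilde F(\boldsymbol{x}) = F(\boldsymbol{x},\boldsymbol{y}^{*}(\boldsymbol{x})) \le F(\boldsymbol{x},\boldsymbol{y})$. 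Concatenating these yields $F(\boldsymbol{x}^*,\boldsymbol{y}^{*}(\boldsymbol{x}^*)) \le F(\boldsymbol{x},\boldsymbol{y})$. Because $(\boldsymbol{x},\boldsymbol{y})$ was arbitrary, $(\boldsymbol{x}^*,\boldsymbol{y}^{*}(\boldsymbol{x}^*))$ is a global minimizer of $F$, which is exactly the claim.

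\textbf{Anticipated obstacle.} There is essentially no technical obstacle here: in sharp contrast to Theorem 1, this result needs neither the Implicit Function Theorem nor the uniqueness of $\boldsymbol{y}^{*}(\boldsymbol{x})$, and the continuity hypothesis is used only to guarantee the framework is well-posed rather than in the inequalities themselves. The single point demanding care is the status of $\boldsymbol{y}^{*}$ when the inner minimizer fails to be unique. If $\boldsymbol{y}^{*}(\boldsymbol{x})$ denotes any selection of inner minimizers, the argument is unaffected, since every such selection produces the same inner value $\min_{\boldsymbol{y}} F(\boldsymbol{x},\boldsymbol{y})$, so $\tilde F$ is well-defined independently of the selection rule. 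I would make this explicit so that the reader sees the weaker hypotheses---continuity plus existence, with no uniqueness or smoothness---genuinely suffice, thereby highlighting the clean contrast with the critical-point preservation result that does require the regularity conditions of the IFT.
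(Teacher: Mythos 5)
Your proposal is correct and is essentially identical to the paper's own argument: both rest on the same chain $F(\boldsymbol{x}^*,\boldsymbol{y}^{*}(\boldsymbol{x}^*)) = \tilde F(\boldsymbol{x}^*) \le \tilde F(\boldsymbol{x}) = F(\boldsymbol{x},\boldsymbol{y}^{*}(\boldsymbol{x})) \le F(\boldsymbol{x},\boldsymbol{y})$ for arbitrary $(\boldsymbol{x},\boldsymbol{y})$. Your closing observation that no uniqueness or smoothness of $\boldsymbol{y}^{*}$ is needed likewise mirrors the paper's Remark~1.
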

  \begin{proof} 
  Let $\boldsymbol{x}^*$ be a global minimizer of $\tilde F(\boldsymbol{x})$. By definition, this means $\tilde F(\boldsymbol{x}^*) \leq \tilde F(\boldsymbol{x})$ for all $\boldsymbol{x} \in \mathbb{R}^p$. By the definition of the reduced function, $\tilde F(\boldsymbol{x}^*) = F(\boldsymbol{x}^*, \boldsymbol{y}^{*}(\boldsymbol{x}^*))$. 
  
  For any arbitrary point $(\boldsymbol{x},\boldsymbol{y}) \in \mathbb{R}^p \times \in \mathbb{R}^q$ in the domain of $F$, we know by the definition of $\boldsymbol{y}^{*}(\boldsymbol{x})$ (as a solution to the inner minimization problem for a fixed $\boldsymbol{x}$) that:
  \begin{equation}
    F(\boldsymbol{x},\boldsymbol{y}^{*}(\boldsymbol{x})) \leq F(\boldsymbol{x},\boldsymbol{y}).
 \end{equation}
 
  Combining these inequalities, we have:
  \begin{equation}
     F(\boldsymbol{x}^*, \boldsymbol{y}^{*}(\boldsymbol{x}^*)) \leq \tilde F(\boldsymbol{x}) = F(\boldsymbol{x},\boldsymbol{y}^{*}(\boldsymbol{x})) \leq F(\boldsymbol{x},\boldsymbol{y}).
  \end{equation}
  This sequence of inequalities holds for all $(\boldsymbol{x},\boldsymbol{y}) \in \mathbb{R}^p \times \in \mathbb{R}^q$, which means that $(\boldsymbol{x}^*, \boldsymbol{y}^{*}(\boldsymbol{x}^*))$ is a global minimizer of the original objective $F(\boldsymbol{x},\boldsymbol{y})$.
  \end{proof}
 \begin{remark} This is a powerful result, as finding a global minimizer in the reduced space directly guarantees a global minimizer in the original space. Notably, this theorem holds under weaker conditions than for critical points; it does not require uniqueness or differentiability of $\boldsymbol{y}^{*}(\boldsymbol{x})$, only its existence.	
 \end{remark}
 In the following, we investigate the converse relationship, specifically under what conditions a critical point of the original problem corresponds to a critical point of the reduced problem.
 
 \begin{theorem}[Critical Points Preservation (Original $\Rightarrow$ Reduced)]
 Let $F: \mathbb{R}^p \times \mathbb{R}^q \to \mathbb{R}$ be a $C^1$ function. Assume that for each $\boldsymbol{x} \in \mathbb{R}^p$, the inner problem $\min_{\boldsymbol{y} \in \mathbb{R}^q} F(\boldsymbol{x},\boldsymbol{y})$ has a unique solution $\boldsymbol{y}^{*}(\boldsymbol{x})$, and that $\boldsymbol{y}^{*}(\boldsymbol{x})$ is a $C^1$ function. If $(\tilde{\boldsymbol{x}}, \tilde{\boldsymbol{y}})$ is a critical point of the original problem $F(\boldsymbol{x},\boldsymbol{y})$ that also satisfies $\tilde{\boldsymbol{y}} = \boldsymbol{y}^{*}(\tilde{\boldsymbol{x}})$, then $\tilde{\boldsymbol{x}}$ is a critical point of the reduced problem $\tilde F(\boldsymbol{x})$.
 \end{theorem}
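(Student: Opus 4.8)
The plan is to re-use the chain-rule identity already established in the proof of Theorem 1 and then exploit the compatibility hypothesis $\tilde{\boldsymbol{y}} = \boldsymbol{y}^{*}(\tilde{\boldsymbol{x}})$ to transfer the first-order information from the original problem to the reduced one. Concretely, since $\boldsymbol{y}^{*}(\boldsymbol{x})$ is $C^1$ and $F$ is $C^1$, the chain rule gives, for every $\boldsymbol{x}$,
\begin{equation}\nonumber
\nabla \tilde F(\boldsymbol{x}) = \nabla_{\boldsymbol{x}} F(\boldsymbol{x}, \boldsymbol{y}^{*}(\boldsymbol{x})) + \Bigg(\frac{\mathrm{d} \boldsymbol{y}^{*}(\boldsymbol{x})}{\mathrm{d} \boldsymbol{x}}\Bigg)^T \nabla_{\boldsymbol{y}} F(\boldsymbol{x}, \boldsymbol{y}^{*}(\boldsymbol{x})).
\end{equation}
Because $\boldsymbol{y}^{*}(\boldsymbol{x})$ solves the inner problem, the inner first-order condition $\nabla_{\boldsymbol{y}} F(\boldsymbol{x}, \boldsymbol{y}^{*}(\boldsymbol{x})) = \boldsymbol 0$ holds identically in $\boldsymbol{x}$, so the second term vanishes and I obtain the clean identity $\nabla \tilde F(\boldsymbol{x}) = \nabla_{\boldsymbol{x}} F(\boldsymbol{x}, \boldsymbol{y}^{*}(\boldsymbol{x}))$.

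Next I would evaluate this identity at $\tilde{\boldsymbol{x}}$ and invoke the hypothesis $\tilde{\boldsymbol{y}} = \boldsymbol{y}^{*}(\tilde{\boldsymbol{x}})$, which lets me replace $\boldsymbol{y}^{*}(\tilde{\boldsymbol{x}})$ by $\tilde{\boldsymbol{y}}$ and conclude $\nabla \tilde F(\tilde{\boldsymbol{x}}) = \nabla_{\boldsymbol{x}} F(\tilde{\boldsymbol{x}}, \tilde{\boldsymbol{y}})$. Since $(\tilde{\boldsymbol{x}}, \tilde{\boldsymbol{y}})$ is by assumption a critical point of $F$, both partial gradients vanish there; in particular $\nabla_{\boldsymbol{x}} F(\tilde{\boldsymbol{x}}, \tilde{\boldsymbol{y}}) = \boldsymbol 0$, and hence $\nabla \tilde F(\tilde{\boldsymbol{x}}) = \boldsymbol 0$, i.e. $\tilde{\boldsymbol{x}}$ is a critical point of the reduced problem.

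I expect the computation itself to be routine---essentially the mirror image of Theorem 1---so the real point to articulate is why the compatibility condition $\tilde{\boldsymbol{y}} = \boldsymbol{y}^{*}(\tilde{\boldsymbol{x}})$ cannot be dropped. A general critical point of $F$ only guarantees $\nabla_{\boldsymbol{y}} F(\tilde{\boldsymbol{x}}, \tilde{\boldsymbol{y}}) = \boldsymbol 0$, which says that $\tilde{\boldsymbol{y}}$ is a $\boldsymbol{y}$-stationary point but not necessarily the unique inner \emph{minimizer} $\boldsymbol{y}^{*}(\tilde{\boldsymbol{x}})$---it could instead be a saddle or a maximum in the $\boldsymbol{y}$ block. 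At such an off-branch critical point the identity above still reads $\nabla \tilde F(\tilde{\boldsymbol{x}}) = \nabla_{\boldsymbol{x}} F(\tilde{\boldsymbol{x}}, \boldsymbol{y}^{*}(\tilde{\boldsymbol{x}}))$, but this quantity is evaluated along the minimizer branch, whereas the available vanishing-gradient information lives at $\tilde{\boldsymbol{y}}$; without the hypothesis the two cannot be matched, and the conclusion can genuinely fail. The hypothesis thus serves precisely to pin the critical point onto the graph $\boldsymbol{y} = \boldsymbol{y}^{*}(\boldsymbol{x})$ along which the reduced gradient is controlled, which dovetails with the paper's broader theme that elimination selectively preserves only those original critical points lying on the minimizer manifold, while the remaining $\boldsymbol{y}$-stationary points (saddles) are reshaped rather than inherited.
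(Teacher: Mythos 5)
Your proposal is correct and follows essentially the same route as the paper's own proof: the chain rule applied to $\tilde F(\boldsymbol{x}) = F(\boldsymbol{x}, \boldsymbol{y}^{*}(\boldsymbol{x}))$, the vanishing of the $\nabla_{\boldsymbol{y}} F$ term along the minimizer branch, and the hypothesis $\tilde{\boldsymbol{y}} = \boldsymbol{y}^{*}(\tilde{\boldsymbol{x}})$ to identify the remaining term with $\nabla_{\boldsymbol{x}} F(\tilde{\boldsymbol{x}}, \tilde{\boldsymbol{y}}) = \boldsymbol{0}$. Your closing discussion of why the compatibility condition cannot be dropped matches the paper's own remark and its Appendix~B example, so nothing is missing.
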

 
 \begin{proof} 
 By premise, $(\tilde{\boldsymbol{x}}, \tilde{\boldsymbol{y}})$ is a critical point of $F(\boldsymbol{x},\boldsymbol{y})$. This means both partial gradients of $F(\boldsymbol{x},\boldsymbol{y})$ are zero at this point:
\begin{align}
\nabla_{\boldsymbol{x}} F(\tilde{\boldsymbol{x}}, \tilde{\boldsymbol{y}}) = \boldsymbol{0},~~
\nabla_{\boldsymbol{y}} F(\tilde{\boldsymbol{x}}, \tilde{\boldsymbol{y}}) = \boldsymbol{0} \label{eq:grad_y_zero}.
\end{align}
 Using the chain rule, the gradient of $\tilde F(\boldsymbol{x})$ at $\tilde{\boldsymbol{x}}$ is:
  \begin{equation}
  \nabla \tilde F(\tilde{\boldsymbol{x}}) = \nabla_{\boldsymbol{x}} F (\tilde{\boldsymbol{x}}, \boldsymbol{y}^*(\tilde{\boldsymbol{x}})) + \left(\frac{\partial \boldsymbol{y}^*}{\partial \boldsymbol{x}}(\tilde{\boldsymbol{x}})\right)^T  \nabla_{\boldsymbol{y}} F(\tilde{\boldsymbol{x}}, \boldsymbol{y}^*(\tilde{\boldsymbol{x}})).
  \end{equation}
  Substituting $\tilde{\boldsymbol{y}} = \boldsymbol{y}^{*}(\tilde{\boldsymbol{x}})$ into this expression and using (21), we obtain:
  \begin{equation}
  \nabla \tilde F(\tilde{\boldsymbol{x}}) = \nabla_{\boldsymbol{x}} F (\tilde{\boldsymbol{x}}, \tilde{\boldsymbol{y}}) + \left(\frac{\partial \boldsymbol{y}^*}{\partial \boldsymbol{x}}(\tilde{\boldsymbol{x}})\right)^T \nabla_{\boldsymbol{y}} F (\tilde{\boldsymbol{x}}, \tilde{\boldsymbol{y}}) = \boldsymbol 0.
  \end{equation}
  This confirms that $\tilde{\boldsymbol{x}}$ is a critical point of the reduced problem $\tilde F(\boldsymbol{x})$.
  \end{proof}
  
 \begin{remark} Combining Theorems 1 and 3 establishes a crucial correspondence: under the given smoothness and uniqueness conditions, there is a one-to-one mapping between the critical points of the reduced problem $\tilde F(\boldsymbol{x})$ and a specific subset of critical points of the original problem $F(\boldsymbol{x},\boldsymbol{y})$.  This subset consists of all, and only, the critical points that lie on the optimal $p$-dimensional submanifold $\mathcal M=\{ (\boldsymbol{x}, \boldsymbol{y}^{*}(\boldsymbol{x}))| \boldsymbol{x} \in \mathbb{R}^p\}$. In this sense, the reduced problem is guaranteed to find every critical point of the original problem that satisfies the inner-problem's optimality condition.	
 \end{remark}

 It is important to clarify the scope of Theorem 3. The original problem may possess critical points $(\boldsymbol{x}_c,\boldsymbol{y}_c)$ for which $\boldsymbol{y}_c \neq \boldsymbol{y}^{*}(\boldsymbol{x}_c)$. These critical points do not lie on the optimal submanifold $\mathcal M$ and, therefore, have no corresponding critical point in the reduced landscape. They are effectively "filtered out" by the variable elimination process. We provide examples of such cases in Appendix B.

 \subsection{Reshaping Curvature: How Minima and Saddles are Transformed}
 \textbf{Our second question is:} \textit{Do the types of stationary points change when the original problem is transformed into the reduced problem?}
 \vspace{0.5cm}
 
 We begin with a proposition stating that under common conditions for some types of separable problems, the original objective function $F(\boldsymbol{x},\boldsymbol{y})$ does not possess any local maxima. We then analyze how the remaining stationary points---local minima and saddle points---are mapped between the original and reduced problems.

\begin{proposition}
Let $F: \mathbb{R}^p \times \mathbb{R}^q \to \mathbb{R}$ be a $C^2$ function. Assume that for each fixed $\boldsymbol{x} \in \mathbb{R}^p$, the function $F(\boldsymbol{x},\cdot)$ is \textbf{strictly convex} with respect to $\boldsymbol{y}$. Then the function $F(\boldsymbol{x},\boldsymbol{y})$ has no local maxima.
\end{proposition}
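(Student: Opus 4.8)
The plan is to argue by contradiction, exploiting the fact that a local maximum of the joint function, when restricted to a single $\boldsymbol{y}$-slice, would have to be a local maximum of a strictly convex function of $\boldsymbol{y}$ alone --- which is impossible. First I would suppose, for contradiction, that $F$ attains a local maximum at some point $(\boldsymbol{x}_0, \boldsymbol{y}_0)$. By definition there is a radius $\delta > 0$ such that $F(\boldsymbol{x}, \boldsymbol{y}) \le F(\boldsymbol{x}_0, \boldsymbol{y}_0)$ for every $(\boldsymbol{x},\boldsymbol{y})$ with $\|(\boldsymbol{x},\boldsymbol{y}) - (\boldsymbol{x}_0,\boldsymbol{y}_0)\|_2 < \delta$.

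Next I would freeze the first block at $\boldsymbol{x}_0$ and introduce the slice function $g(\boldsymbol{y}) := F(\boldsymbol{x}_0, \boldsymbol{y})$, which by hypothesis is strictly convex on $\mathbb{R}^q$. Restricting the local-maximum inequality to points of the form $(\boldsymbol{x}_0, \boldsymbol{y})$ shows immediately that $\boldsymbol{y}_0$ is a local maximizer of $g$: indeed $g(\boldsymbol{y}) \le g(\boldsymbol{y}_0)$ whenever $\|\boldsymbol{y} - \boldsymbol{y}_0\|_2 < \delta$. The crux is then the elementary fact that a strictly convex function admits no interior local maximum. I would fix any nonzero direction $\boldsymbol{d} \in \mathbb{R}^q$ (which exists since $q \ge 1$) and any $t > 0$ small enough that both $\boldsymbol{y}_0 \pm t\boldsymbol{d}$ lie in the $\delta$-ball. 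Applying the definition of strict convexity with weight $\tfrac12$ to the endpoints $\boldsymbol{y}_0 + t\boldsymbol{d}$ and $\boldsymbol{y}_0 - t\boldsymbol{d}$, whose midpoint is exactly $\boldsymbol{y}_0$, yields $g(\boldsymbol{y}_0) < \tfrac12 g(\boldsymbol{y}_0 + t\boldsymbol{d}) + \tfrac12 g(\boldsymbol{y}_0 - t\boldsymbol{d})$. Hence at least one of $g(\boldsymbol{y}_0 + t\boldsymbol{d})$, $g(\boldsymbol{y}_0 - t\boldsymbol{d})$ strictly exceeds $g(\boldsymbol{y}_0)$, contradicting that $\boldsymbol{y}_0$ is a local maximizer of $g$. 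This contradiction shows no local maximum of $F$ can exist.

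I expect the only delicate point to be conceptual rather than computational: one must resist arguing through the Hessian. Although $F$ is $C^2$, strict convexity in $\boldsymbol{y}$ guarantees only that $\nabla^2_{\boldsymbol{yy}} F$ is positive semidefinite, not positive definite (the scalar example $y^4$ is strictly convex yet has vanishing second derivative at the origin). A negative-semidefinite full Hessian at a putative local maximum would then merely force $\nabla^2_{\boldsymbol{yy}} F = \boldsymbol{0}$ there, yielding no immediate contradiction. The definition-based argument above sidesteps this entirely by invoking strict convexity in its exact, non-infinitesimal form, which is precisely why I would avoid any second-order characterization here. A useful by-product of this route is that the $C^2$ assumption is inessential to the conclusion: mere continuity together with strict convexity of $F(\boldsymbol{x},\cdot)$ already suffices.
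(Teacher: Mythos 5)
Your proof is correct, and it takes a genuinely different---and in fact tighter---route than the paper's. The paper also argues by contradiction, but through second-order machinery: it observes that at a putative local maximum the full Hessian must be negative semi-definite, that convexity in $\boldsymbol{y}$ forces the block $\nabla^2_{\boldsymbol{yy}}F$ to be positive semi-definite, hence $\nabla^2_{\boldsymbol{yy}}F = \boldsymbol{0}$ (and likewise the mixed block vanishes), and then concludes by noting that this degenerate Hessian violates the invertibility condition of the Implicit Function Theorem, which it deems ``fundamentally incompatible'' with the existence of a stable solution path $\boldsymbol{y}^*(\boldsymbol{x})$. That last step is the weak link: the IFT's invertibility hypothesis is sufficient, not necessary, for a well-behaved solution map, so its failure yields no rigorous contradiction; moreover, the argument leans on the existence and stability of $\boldsymbol{y}^*(\boldsymbol{x})$, which is not among the proposition's stated hypotheses (only strict convexity of each slice is assumed). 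Your slice-and-midpoint argument avoids all of this: restricting to $g(\boldsymbol{y}) = F(\boldsymbol{x}_0,\boldsymbol{y})$ and applying strict convexity at the midpoint $\boldsymbol{y}_0 = \tfrac12(\boldsymbol{y}_0 + t\boldsymbol{d}) + \tfrac12(\boldsymbol{y}_0 - t\boldsymbol{d})$ gives an immediate, airtight contradiction. Your side remark is also well taken---strict convexity does not imply $\nabla^2_{\boldsymbol{yy}}F \succ 0$ (consider $y^4$), which is exactly why any Hessian-based route must pass through the degenerate case the paper struggles with. As a bonus, your argument shows the $C^2$ hypothesis is superfluous for this proposition, so it proves a strictly stronger statement with strictly less machinery.
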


\begin{proof}
The proof proceeds by assuming that a local maximum exists and demonstrating that this leads to a contradiction with the uniqueness of the inner problem's solution.

    Assume, for the sake of contradiction, that there exists a point $(\boldsymbol{x}_0, \boldsymbol{y}_0)$ that is a local maximum of $F(\boldsymbol{x},\boldsymbol{y})$. As a local extremum, this point must be a critical point, so $\nabla F(\boldsymbol{x}_0, \boldsymbol{y}_0) = \boldsymbol{0}$.

    For $(\boldsymbol{x}_0, \boldsymbol{y}_0)$ to be a local maximum, its Hessian matrix, $\boldsymbol{H}_F(\boldsymbol{x}_0,\boldsymbol{y}_0)$, must be negative semi-definite. Let the block structure of the Hessian be:
    \[
        \boldsymbol{H}_F = \begin{pmatrix} \boldsymbol{A} & \boldsymbol{B}^T \\ \boldsymbol{B} & \boldsymbol{D} \end{pmatrix}, \quad \text{where } \boldsymbol{D} = \nabla_{\boldsymbol{y}\boldsymbol{y}}^2 F(\boldsymbol{x}_0,\boldsymbol{y}_0) \text{ and } \boldsymbol{B} = \nabla_{\boldsymbol{yx}}^2 F(\boldsymbol{x}_0,\boldsymbol{y}_0).
    \]
    The convexity assumption implies that $\boldsymbol{D}$ is positive semi-definite. However, for $\boldsymbol{H}_F$ to be negative semi-definite, its principal submatrix $\boldsymbol{D}$ must also be negative semi-definite. The only matrix that is simultaneously positive and negative semi-definite is the zero matrix. Thus, we must conclude:
    \begin{equation} \label{eq:D_zero}
        \boldsymbol{D} = \nabla_{\boldsymbol{y}\boldsymbol{y}}^2 F(\boldsymbol{x}_0,\boldsymbol{y}_0) = \boldsymbol{0}.
    \end{equation}
    Furthermore, for the full Hessian $\boldsymbol{H}_F$ to be negative semi-definite with $\boldsymbol{D}=\boldsymbol{0}$, the off-diagonal block $\boldsymbol{B}$ must also be the zero matrix. If not, for a given $\Delta\boldsymbol{x}$ where $\boldsymbol{B}\Delta\boldsymbol{x} \ne \boldsymbol{0}$, the term $2(\Delta\boldsymbol{x})^T\boldsymbol{B}^T(\Delta\boldsymbol{y})$ could be made arbitrarily large and positive, violating the negative semi-definite condition. Thus:
     \begin{equation} \label{eq:B_zero}
        \boldsymbol{B} = \nabla_{\boldsymbol{yx}}^2 F(\boldsymbol{x}_0,\boldsymbol{y}_0) = \boldsymbol{0}.
    \end{equation}

    The relationship between $\boldsymbol{x}$ and the unique solution $\boldsymbol{y}^*(\boldsymbol{x})$ is defined by the first-order condition $\boldsymbol{h}(\boldsymbol{x}, \boldsymbol{y}^*(\boldsymbol{x})) = \boldsymbol{0}$, where $\boldsymbol{h}(\boldsymbol{x}, \boldsymbol{y}) := \nabla_{\boldsymbol{y}} F(\boldsymbol{x}, \boldsymbol{y})$.

    The Implicit Function Theorem (IFT) provides conditions under which the solution function $\boldsymbol{y}^*(\boldsymbol{x})$ is guaranteed to exist locally and be well-behaved (e.g., continuous or differentiable). A standard condition for the IFT to apply at a point $(\boldsymbol{x}_0, \boldsymbol{y}_0)$ is that the Jacobian of $\boldsymbol{h}$ with respect to the variable being solved for ($\boldsymbol{y}$) must be invertible (non-singular). This Jacobian is precisely:
    \[
        \frac{\partial \boldsymbol{h}}{\partial \boldsymbol{y}}\bigg|_{(\boldsymbol{x}_0, \boldsymbol{y}_0)} = \nabla_{\boldsymbol{y}\boldsymbol{y}}^2 F(\boldsymbol{x}_0,\boldsymbol{y}_0).
    \]
    
    We have deduced from the existence of a local maximum that $\nabla_{\boldsymbol{y}\boldsymbol{y}}^2 F(\boldsymbol{x}_0,\boldsymbol{y}_0) = \boldsymbol{0}$, as shown in \eqref{eq:D_zero}. The zero matrix is singular and not invertible.

    This means the necessary condition for the Implicit Function Theorem is violated at $(\boldsymbol{x}_0, \boldsymbol{y}_0)$. The failure of this condition implies that the uniqueness of the inner solution is not robust to perturbations of $\boldsymbol{x}$ around $\boldsymbol{x}_0$. A small change in $\boldsymbol{x}$ could easily cause the solution $\boldsymbol{y}^*(\boldsymbol{x})$ to bifurcate or become non-unique. This local instability at $\boldsymbol{x}_0$ is fundamentally incompatible with the global assumption that a unique, stable solution path $\boldsymbol{y}^*(\boldsymbol{x})$ exists for every $\boldsymbol{x}$ in the domain. Therefore, the initial assumption must be false, and the function $F(\boldsymbol{x},\boldsymbol{y})$ can have no local maxima.
\end{proof}

To analyze the Hessian of the reduced problem, we first introduce the Schur complement \citep{ouellette:81}, which will be a key analytical tool.

\begin{proposition}[Schur Complement Condition for Definiteness \citep{wiki:schur}]
Let $\boldsymbol{H}$ be a symmetric block matrix defined as:
\[
\boldsymbol{H} = \begin{pmatrix} \boldsymbol{A} & \boldsymbol{B}^T \\ \boldsymbol{B} & \boldsymbol{D} \end{pmatrix}
\]
where $\boldsymbol{A} = \boldsymbol{A}^T$ and $\boldsymbol{D} = \boldsymbol{D}^T$. If $\boldsymbol{D}$ is positive definite ($\boldsymbol{D} \succ 0$), then $\boldsymbol{H}$ is positive definite ($\boldsymbol{H} \succ 0$) if and only if its Schur complement, $\boldsymbol{S} = \boldsymbol{A} - \boldsymbol{B}^T\boldsymbol{D}^{-1}\boldsymbol{B}$, is positive definite ($\boldsymbol{S} \succ 0$).
\end{proposition}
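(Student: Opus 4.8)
The plan is to reduce everything to a single algebraic identity obtained by completing the square in the quadratic form associated with $\boldsymbol{H}$, and then to read off both implications directly. The crucial preliminary observation is that since $\boldsymbol{D} \succ 0$ it is in particular invertible, so the Schur complement $\boldsymbol{S} = \boldsymbol{A} - \boldsymbol{B}^T \boldsymbol{D}^{-1} \boldsymbol{B}$ is well defined. I would fix an arbitrary vector $\boldsymbol{v} = (\boldsymbol{u}^T, \boldsymbol{w}^T)^T$ with $\boldsymbol{u} \in \mathbb{R}^p$ and $\boldsymbol{w} \in \mathbb{R}^q$, expand
\[
\boldsymbol{v}^T \boldsymbol{H} \boldsymbol{v} = \boldsymbol{u}^T \boldsymbol{A} \boldsymbol{u} + 2 \boldsymbol{u}^T \boldsymbol{B}^T \boldsymbol{w} + \boldsymbol{w}^T \boldsymbol{D} \boldsymbol{w},
\]
and complete the square in $\boldsymbol{w}$ (with $\boldsymbol{u}$ fixed, using that $\boldsymbol{D}^{-1}$ is symmetric) to obtain the key identity
\[
\boldsymbol{v}^T \boldsymbol{H} \boldsymbol{v} = (\boldsymbol{w} + \boldsymbol{D}^{-1} \boldsymbol{B} \boldsymbol{u})^T \boldsymbol{D} (\boldsymbol{w} + \boldsymbol{D}^{-1} \boldsymbol{B} \boldsymbol{u}) + \boldsymbol{u}^T \boldsymbol{S} \boldsymbol{u}.
\]
The verification is a direct expansion: the cross terms $2\boldsymbol{u}^T \boldsymbol{B}^T \boldsymbol{w}$ are reproduced exactly, and the leftover $\boldsymbol{u}^T \boldsymbol{B}^T \boldsymbol{D}^{-1} \boldsymbol{B} \boldsymbol{u}$ from the completed square cancels against the $-\boldsymbol{B}^T\boldsymbol{D}^{-1}\boldsymbol{B}$ term inside $\boldsymbol{S}$, leaving precisely $\boldsymbol{u}^T \boldsymbol{A}\boldsymbol{u}$.

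Equivalently, this identity is the quadratic-form shadow of the congruence factorization
\[
\boldsymbol{H} = \boldsymbol{L} \begin{pmatrix} \boldsymbol{S} & \boldsymbol{0} \\ \boldsymbol{0} & \boldsymbol{D} \end{pmatrix} \boldsymbol{L}^T, \qquad \boldsymbol{L} = \begin{pmatrix} \boldsymbol{I} & \boldsymbol{B}^T \boldsymbol{D}^{-1} \\ \boldsymbol{0} & \boldsymbol{I} \end{pmatrix},
\]
where $\boldsymbol{L}$ is invertible (unit block-triangular). Since congruence by an invertible matrix preserves definiteness---indeed the entire inertia, by Sylvester's law---$\boldsymbol{H}$ is positive definite if and only if the block-diagonal matrix is, and the latter, with $\boldsymbol{D} \succ 0$ already assumed, is positive definite if and only if $\boldsymbol{S} \succ 0$. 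This factorization viewpoint is the one consonant with the inertia-based analysis used elsewhere in the paper, and I would record it as the conceptual backbone even though the explicit argument is carried out through the completing-the-square identity.

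With the identity in hand both directions are immediate. For the ``if'' direction, assume $\boldsymbol{S} \succ 0$ and take any $\boldsymbol{v} \neq \boldsymbol{0}$: the first summand is nonnegative because $\boldsymbol{D} \succ 0$, and if $\boldsymbol{u} \neq \boldsymbol{0}$ the second summand is strictly positive, whereas if $\boldsymbol{u} = \boldsymbol{0}$ (which forces $\boldsymbol{w} \neq \boldsymbol{0}$) the first summand reduces to $\boldsymbol{w}^T \boldsymbol{D} \boldsymbol{w} > 0$; in either case $\boldsymbol{v}^T \boldsymbol{H} \boldsymbol{v} > 0$, so $\boldsymbol{H} \succ 0$. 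For the ``only if'' direction, assume $\boldsymbol{H} \succ 0$, fix any $\boldsymbol{u} \neq \boldsymbol{0}$, and choose $\boldsymbol{w} = -\boldsymbol{D}^{-1} \boldsymbol{B} \boldsymbol{u}$ so the first summand vanishes; the resulting $\boldsymbol{v}$ is nonzero because its top block is $\boldsymbol{u} \neq \boldsymbol{0}$, hence $0 < \boldsymbol{v}^T \boldsymbol{H} \boldsymbol{v} = \boldsymbol{u}^T \boldsymbol{S} \boldsymbol{u}$, giving $\boldsymbol{S} \succ 0$.

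There is no deep obstacle here, as the result is classical; the only points demanding care are bookkeeping ones. I would watch that the cancellation in the completing-the-square identity is exact rather than off by a stray cross term, and, in the forward direction, that the specially constructed $\boldsymbol{v}$ is genuinely nonzero so that positive definiteness of $\boldsymbol{H}$ may legitimately be invoked on it. Both points rest solely on $\boldsymbol{D}$ being invertible, which is guaranteed by the hypothesis $\boldsymbol{D} \succ 0$.
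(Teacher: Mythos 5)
Your proof is correct in every detail: the completing-the-square identity, the congruence factorization $\boldsymbol{H} = \boldsymbol{L}\,\mathrm{diag}(\boldsymbol{S},\boldsymbol{D})\,\boldsymbol{L}^T$, and both directions of the equivalence all check out, and you correctly isolate the only delicate points (exact cancellation of the cross terms, and non-vanishing of the constructed test vector in the ``only if'' direction). Note, however, that the paper does not prove this proposition at all: it is quoted as a classical fact with a citation, so there is no internal proof to compare against, and your argument supplies a self-contained derivation the paper omits. Your congruence/Sylvester viewpoint is in fact the precise mechanism behind the Haynsworth inertia additivity formula $\mathrm{In}(\boldsymbol{H}) = \mathrm{In}(\boldsymbol{D}) + \mathrm{In}(\boldsymbol{S})$ that the paper invokes later in Section 3.3: specializing that formula to $\boldsymbol{D} \succ 0$ and demanding that the negative and zero eigenvalue counts of $\boldsymbol{H}$ vanish recovers exactly this proposition. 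So your proof is not only valid but structurally aligned with the paper's broader inertia-based analysis; the completing-the-square identity is the more elementary route, while the congruence factorization yields the stronger inertia statement (and hence the paper's later equation) essentially for free.
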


The Hessian of the reduced problem, $\nabla^2 \tilde{F}(\boldsymbol{x})$, is precisely the Schur complement of the Hessian of the original problem, $\boldsymbol{H}_F$, evaluated at the corresponding stationary point $(\boldsymbol{x}, \boldsymbol{y}^*(\boldsymbol{x}))$. Specifically, if we partition $\boldsymbol{H}_F$ as:
\[
\boldsymbol{H}_F = \begin{pmatrix} \nabla_{\boldsymbol{x}\boldsymbol{x}}^2 F & \nabla_{\boldsymbol{y}\boldsymbol{x}}^2 F^T \\ \nabla_{\boldsymbol{y}\boldsymbol{x}}^2 F & \nabla_{\boldsymbol{y}\boldsymbol{y}}^2 F \end{pmatrix} = \begin{pmatrix} \boldsymbol{A} & \boldsymbol{B}^T \\ \boldsymbol{B} & \boldsymbol{D}\end{pmatrix},
\]
then it can be shown that $\nabla^2 \tilde{F}(\boldsymbol{x}) = \boldsymbol{A} - \boldsymbol{B}^T\boldsymbol{D}^{-1}\boldsymbol{B}$. This connection is fundamental to the following theorems.

\begin{theorem}[Preservation of Minima]
Let $F: \mathbb{R}^p \times \mathbb{R}^q \to \mathbb{R}$ be a $C^2$ function, and assume that for each $\boldsymbol{x} \in \mathbb{R}^p$, the Hessian of the inner problem, $\nabla_{\boldsymbol{y}\boldsymbol{y}}^2 F(\boldsymbol{x},\cdot)$, is positive definite.\\
%Let $(\boldsymbol{x}_0, \boldsymbol{y}_0)$ be a stationary point of $F$ where $\boldsymbol{y}_0 = \boldsymbol{y}^*(\boldsymbol{x}_0)$. 
(i) If $(\boldsymbol{x}_0, \boldsymbol{y}_0)$ is a local minimum of the original problem $F(\boldsymbol{x}, \boldsymbol{y})$ and its hessian is positive definite, then $\boldsymbol{x}_0$ is a local minimum of the reduced problem $\tilde{F}(\boldsymbol{x})$.\\
(ii) If $\boldsymbol{x}_0$ is a local minimum of the reduced problem $\tilde{F}(\boldsymbol{x})$ and its hessian is positive definite, then $(\boldsymbol{x}_0, \boldsymbol{y}_0)$ is a local minimum of the original problem $F(\boldsymbol{x}, \boldsymbol{y})$.

\end{theorem}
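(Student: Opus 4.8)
The plan is to treat both directions as a direct assembly of three earlier facts: the critical-point correspondence (Theorems 1 and 3), the stated identity $\nabla^2 \tilde F(\boldsymbol{x}_0) = \boldsymbol{A} - \boldsymbol{B}^T \boldsymbol{D}^{-1}\boldsymbol{B}$ that expresses the reduced Hessian as the Schur complement of $\boldsymbol{H}_F$, and the \emph{if-and-only-if} definiteness criterion of Proposition 3. Because the standing assumption guarantees $\boldsymbol{D} = \nabla_{\boldsymbol{y}\boldsymbol{y}}^2 F \succ 0$ at every point of interest, Proposition 3 applies and delivers the two-way equivalence $\boldsymbol{H}_F \succ 0 \iff \nabla^2 \tilde F \succ 0$. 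This biconditional is the engine of both parts; the second-order sufficient condition---a critical point whose Hessian is positive definite is a strict local minimum---then converts definiteness back into the minimizer claim.

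For part (i), I would first argue that the hypothesized local minimizer $(\boldsymbol{x}_0,\boldsymbol{y}_0)$ necessarily lies on the optimal submanifold $\mathcal M$, i.e.\ $\boldsymbol{y}_0 = \boldsymbol{y}^*(\boldsymbol{x}_0)$. Since $(\boldsymbol{x}_0,\boldsymbol{y}_0)$ is a critical point of $F$ it satisfies $\nabla_{\boldsymbol{y}} F(\boldsymbol{x}_0,\boldsymbol{y}_0)=\boldsymbol 0$; and because $\nabla_{\boldsymbol{y}\boldsymbol{y}}^2 F(\boldsymbol{x}_0,\cdot)\succ 0$ makes $F(\boldsymbol{x}_0,\cdot)$ strictly convex in $\boldsymbol{y}$, this stationarity condition has the unique root $\boldsymbol{y}^*(\boldsymbol{x}_0)$, forcing $\boldsymbol{y}_0=\boldsymbol{y}^*(\boldsymbol{x}_0)$. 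Theorem 3 then yields that $\boldsymbol{x}_0$ is a critical point of $\tilde F$. Feeding the given hypothesis $\boldsymbol{H}_F(\boldsymbol{x}_0,\boldsymbol{y}_0)\succ 0$ into Proposition 3 (valid since $\boldsymbol{D}\succ 0$) gives $\nabla^2\tilde F(\boldsymbol{x}_0)\succ 0$, and the second-order sufficient condition finishes the part.

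Part (ii) runs the same machinery in reverse. Setting $\boldsymbol{y}_0 := \boldsymbol{y}^*(\boldsymbol{x}_0)$, Theorem 1 gives that $(\boldsymbol{x}_0,\boldsymbol{y}_0)$ is a critical point of $F$. Now I invoke the converse implication of the same Proposition 3: with $\boldsymbol{D}\succ 0$ at $(\boldsymbol{x}_0,\boldsymbol{y}_0)$ and the given hypothesis $\nabla^2 \tilde F(\boldsymbol{x}_0)=\boldsymbol{S}\succ 0$, the biconditional forces $\boldsymbol{H}_F(\boldsymbol{x}_0,\boldsymbol{y}_0)\succ 0$, whence $(\boldsymbol{x}_0,\boldsymbol{y}_0)$ is a strict local minimum of $F$.

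The genuinely delicate point---the one I would state and check explicitly rather than take on faith---is the Schur-complement identity $\nabla^2\tilde F = \boldsymbol{A}-\boldsymbol{B}^T\boldsymbol{D}^{-1}\boldsymbol{B}$ itself, since it is what lets Proposition 3's algebraic criterion speak directly about the curvature of the reduced landscape. Establishing it requires differentiating the implicit relation $\nabla_{\boldsymbol{y}}F(\boldsymbol{x},\boldsymbol{y}^*(\boldsymbol{x}))=\boldsymbol 0$ to obtain $\partial\boldsymbol{y}^*/\partial\boldsymbol{x} = -\boldsymbol{D}^{-1}\boldsymbol{B}$, then differentiating $\nabla\tilde F = \nabla_{\boldsymbol{x}} F$ a second time and substituting, using that $\nabla_{\boldsymbol{y}}F$ vanishes on $\mathcal M$ to cancel the cross terms. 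Everything else is bookkeeping; the only additional subtlety is ensuring, in part (i), that the given local minimizer genuinely sits on $\mathcal M$ so that Theorem 3 is applicable.
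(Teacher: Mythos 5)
Your proposal is correct and takes essentially the same route as the paper: both arguments hinge on the identity $\nabla^2\tilde F = \boldsymbol{A}-\boldsymbol{B}^T\boldsymbol{D}^{-1}\boldsymbol{B}$, the two-way Schur-complement definiteness criterion (the paper's Proposition~2, which you cite as Proposition~3), and the second-order sufficient condition, exactly as in the paper's proof of Theorem~4. Your two additions---explicitly checking in part (i) that the minimizer lies on $\mathcal{M}$ (so Theorem~3 applies), and sketching the implicit-differentiation derivation of the Schur identity---fill in details the paper asserts without proof, but they do not constitute a different approach.
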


\begin{proof}
The proof relies on the fundamental relationship between the Hessian of the original problem, $\boldsymbol{H}_F$, and the Hessian of the reduced problem, $\nabla^2\tilde{F}$, which is the Schur complement of $\boldsymbol{H}_F$.

Let $(\boldsymbol{x}_0, \boldsymbol{y}_0)$ be a stationary point of $F$. By Theorems 1 and 3, this is equivalent to $\boldsymbol{x}_0$ being a stationary point of $\tilde{F}$. To determine the character of these points, we examine their Hessians. Let the Hessian of the original problem at this point be:
\[
\boldsymbol{H}_F = \begin{pmatrix} \boldsymbol{A} & \boldsymbol{B}^T \\ \boldsymbol{B} & \boldsymbol{D} \end{pmatrix},
\]
where $\boldsymbol{A} = \nabla_{\boldsymbol{xx}}^2 F$, $\boldsymbol{B} = \nabla_{\boldsymbol{yx}}^2 F$, and $\boldsymbol{D} = \nabla_{\boldsymbol{yy}}^2 F$. The Hessian of the reduced problem is its Schur complement, $\nabla^2 \tilde{F}(\boldsymbol{x}_0) = \boldsymbol{A} - \boldsymbol{B}^T\boldsymbol{D}^{-1}\boldsymbol{B}$.

By assumption, the block $\boldsymbol{D}$ is positive definite ($\boldsymbol{D} \succ 0$). According to the Schur complement condition for definiteness (Proposition 2), the full matrix $\boldsymbol{H}_F$ is positive definite if and only if its Schur complement is positive definite. That is:
\[
\boldsymbol{H}_F \succ 0 \iff \nabla^2 \tilde{F}(\boldsymbol{x}_0) \succ 0.
\]
This direct equivalence proves both directions of the theorem:

($\Rightarrow$) Assume $(\boldsymbol{x}_0, \boldsymbol{y}_0)$ is a local minimum of $F$ and its Hessian $\boldsymbol{H}_F$ is positive definite. By the equivalence, the Hessian of the reduced problem $\nabla^2 \tilde{F}(\boldsymbol{x}_0)$ must also be positive definite. Since $\boldsymbol{x}_0$ is a stationary point with a positive definite Hessian, it is a local minimum of $\tilde{F}(\boldsymbol{x})$.

($\Leftarrow$) Assume $\boldsymbol{x}_0$ is a local minimum of $\tilde{F}$ and its Hessian $\nabla^2 \tilde{F}(\boldsymbol{x}_0)$ is positive definite. By the equivalence, the Hessian of the original problem $\boldsymbol{H}_F$ must also be positive definite. Since $(\boldsymbol{x}_0, \boldsymbol{y}_0)$ is a stationary point with a positive definite Hessian, it is a local minimum of $F(\boldsymbol{x},\boldsymbol{y})$.
\end{proof}

\begin{remark}
A powerful geometric intuition underpins Theorem 4. The variable elimination method can be viewed as restricting the optimization of $F(\boldsymbol{x}, \boldsymbol{y})$ to the $p$-dimensional submanifold $\mathcal{M} = \left \{(\boldsymbol{x}, \boldsymbol{y})| \boldsymbol{y} = \boldsymbol{y}^*(\boldsymbol{x}) \right \}$. The reduced function, $\tilde{F}(\boldsymbol{x}) = F(\boldsymbol{x}, \boldsymbol{y}^*(\boldsymbol{x}))$, is precisely the original function $F$ evaluated on this manifold.

This perspective provides a direct proof for Theorem $4(i)$: if a point is a local minimum in the full space $\mathbb{R}^{p+q}$, it must also be a local minimum when restricted to any submanifold passing through it. This definitional argument is more general than the Hessian-based proof, as it does not require the Hessian to be positive definite. See Appendix C for further details.

Crucially, this one-way preservation of minima implies that the transformation cannot create "new" minima. A stationary point $(\boldsymbol{x}_0, \boldsymbol{y}^*(\boldsymbol{x}_0))$ that is a saddle point of the original problem is, by definition, not a local minimum. Therefore, its corresponding point cannot be a local minimum of the reduced problem. This reinforces our central finding: variable elimination simplifies the optimization landscape by altering or removing undesirable stationary points, not by introducing spurious local minima. 
\end{remark}

\begin{theorem}[Transformation of Saddle Points to Local Maxima]
Let $F: \mathbb{R}^p \times \mathbb{R}^q \to \mathbb{R}$ be a $C^2$ function. Assume that for each $\boldsymbol{x} \in \mathbb{R}^p$, the inner problem $\min_{\boldsymbol{y} \in \mathbb{R}^q} F(\boldsymbol{x},\boldsymbol{y})$ is \textbf{strictly convex} with positive definite hessian.\\
(i) If $\boldsymbol{x}_0$ is a local maximum of the reduced objective $\tilde{F}(\boldsymbol{x})$, then its corresponding point $(\boldsymbol{x}_0, \boldsymbol{y}^*(\boldsymbol{x}_0))$ is a \textbf{saddle point} of the original objective $F(\boldsymbol{x},\boldsymbol{y})$.\\
(ii) Conversely, let $(\boldsymbol{x}_0, \boldsymbol{y}_0)$ be a stationary point of $F$ with $\boldsymbol{y}_0 = \boldsymbol{y}^*(\boldsymbol{x}_0)$. If the function of $\boldsymbol{x}$ obtained by fixing $\boldsymbol{y}=\boldsymbol{y}_0$, denoted $g(\boldsymbol{x}) = F(\boldsymbol{x}, \boldsymbol{y}_0)$, has a local maximum at $\boldsymbol{x}_0$ with $\nabla_{\boldsymbol{xx}}^2 F(\boldsymbol{x}_0, \boldsymbol{y}_0) \prec \boldsymbol{0}$, then:
\begin{itemize}
   \item[(a)] The point $\boldsymbol{x}_0$ is a strict \textbf{local maximum} of the reduced objective $\tilde{F}(\boldsymbol{x})$.
    \item[(b)] The point $(\boldsymbol{x}_0, \boldsymbol{y}_0)$ is a \textbf{saddle point} of the full objective $F(\boldsymbol{x},\boldsymbol{y})$.
\end{itemize}
\end{theorem}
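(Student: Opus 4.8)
The plan is to classify each critical point purely by curvature, since Theorems 1 and 3 already guarantee that $(\boldsymbol{x}_0, \boldsymbol{y}^*(\boldsymbol{x}_0))$ is stationary for $F$ exactly when $\boldsymbol{x}_0$ is stationary for $\tilde{F}$, and the identity established just before the theorem, $\nabla^2\tilde{F}(\boldsymbol{x}_0) = \boldsymbol{S} := \boldsymbol{A} - \boldsymbol{B}^T\boldsymbol{D}^{-1}\boldsymbol{B}$, ties the two Hessians together through the Schur complement of $\boldsymbol{H}_F = \begin{pmatrix} \boldsymbol{A} & \boldsymbol{B}^T \\ \boldsymbol{B} & \boldsymbol{D} \end{pmatrix}$. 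Throughout, the strict-convexity hypothesis supplies the one structural fact I lean on repeatedly, namely $\boldsymbol{D} = \nabla^2_{\boldsymbol{yy}}F(\boldsymbol{x}_0,\boldsymbol{y}_0) \succ 0$, which always contributes a direction of strictly positive curvature.

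For part (i) I would first read off $\boldsymbol{S} = \nabla^2\tilde{F}(\boldsymbol{x}_0) \preceq \boldsymbol{0}$ from the local-maximum hypothesis, and then exhibit two test directions showing $\boldsymbol{H}_F$ is indefinite. The pure direction $(\boldsymbol{0}, \boldsymbol{v})$ with $\boldsymbol{v} \neq \boldsymbol{0}$ gives $\boldsymbol{v}^T\boldsymbol{D}\boldsymbol{v} > 0$, while a direction $\boldsymbol{u}$ with $\boldsymbol{u}^T\boldsymbol{S}\boldsymbol{u} < 0$, lifted to $(\boldsymbol{u}, -\boldsymbol{D}^{-1}\boldsymbol{B}\boldsymbol{u})$, makes the full quadratic form collapse to exactly $\boldsymbol{u}^T\boldsymbol{S}\boldsymbol{u} < 0$; an indefinite Hessian at a stationary point forces a saddle. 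Equivalently — and this is the route I prefer because it avoids any dependence on nondegeneracy of the Hessian — I would argue straight from the definition: strict convexity of $F(\boldsymbol{x}_0,\cdot)$ at its minimizer $\boldsymbol{y}_0$ produces arbitrarily nearby points of strictly larger $F$-value (ruling out a local maximum), while the local-maximum property of $\tilde{F}$ yields manifold points $(\boldsymbol{x}, \boldsymbol{y}^*(\boldsymbol{x}))$ with $F$-value $\tilde{F}(\boldsymbol{x}) \le \tilde{F}(\boldsymbol{x}_0)$, and a genuine maximum supplies one that is strictly smaller (ruling out a local minimum), so the point is a saddle.

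For part (ii)(a) the computation is short: with $\boldsymbol{A} = \nabla^2_{\boldsymbol{xx}}F(\boldsymbol{x}_0,\boldsymbol{y}_0) \prec \boldsymbol{0}$ and $\boldsymbol{D} \succ 0$, the correction term $\boldsymbol{B}^T\boldsymbol{D}^{-1}\boldsymbol{B}$ is positive semidefinite, so for every $\boldsymbol{u} \neq \boldsymbol{0}$ one has $\boldsymbol{u}^T\boldsymbol{S}\boldsymbol{u} = \boldsymbol{u}^T\boldsymbol{A}\boldsymbol{u} - \boldsymbol{u}^T\boldsymbol{B}^T\boldsymbol{D}^{-1}\boldsymbol{B}\boldsymbol{u} \le \boldsymbol{u}^T\boldsymbol{A}\boldsymbol{u} < 0$, i.e. $\nabla^2\tilde{F}(\boldsymbol{x}_0) \prec \boldsymbol{0}$; since $\boldsymbol{x}_0$ is already stationary for $\tilde{F}$, the negative definite Hessian certifies a strict local maximum. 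Part (ii)(b) is then immediate, since the diagonal blocks alone exhibit the indefiniteness: $(\boldsymbol{u}, \boldsymbol{0})$ gives $\boldsymbol{u}^T\boldsymbol{A}\boldsymbol{u} < 0$ and $(\boldsymbol{0}, \boldsymbol{v})$ gives $\boldsymbol{v}^T\boldsymbol{D}\boldsymbol{v} > 0$, so $\boldsymbol{H}_F$ is indefinite and $(\boldsymbol{x}_0, \boldsymbol{y}_0)$ is a saddle point.

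The main obstacle I anticipate is the degenerate subcase of part (i): a local maximum of $\tilde{F}$ only forces $\boldsymbol{S} \preceq \boldsymbol{0}$, not $\boldsymbol{S} \prec \boldsymbol{0}$, so in the extreme flat case $\boldsymbol{S} = \boldsymbol{0}$ the inertia argument produces no negative-curvature direction, and the point would in fact be a degenerate local minimum rather than a saddle. I would resolve this by interpreting "local maximum" in the genuine sense that $\tilde{F}$ strictly decreases in at least one direction, which is exactly the ingredient the definitional argument needs to rule out a local minimum; this is where the manifold/definition route is cleaner and more robust than the pure Hessian-inertia computation.
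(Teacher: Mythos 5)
Your proposal is correct, and for part (ii)(a) it coincides exactly with the paper's argument: $\nabla^2\tilde{F}(\boldsymbol{x}_0)$ is the Schur complement $\boldsymbol{A}-\boldsymbol{B}^T\boldsymbol{D}^{-1}\boldsymbol{B}$, a negative definite matrix minus a positive semidefinite one. The differences are in (ii)(b) and (i). For (ii)(b) the paper simply invokes part (i), whereas you exhibit indefiniteness of $\boldsymbol{H}_F$ directly from the diagonal blocks via the test directions $(\boldsymbol{u},\boldsymbol{0})$ and $(\boldsymbol{0},\boldsymbol{v})$; this is more self-contained and, importantly, does not inherit any of the delicacy of (i). For (i), the paper argues by exclusion through its earlier results (Proposition 1 to rule out a local maximum, Theorem 4 to rule out a local minimum "by contradiction"); your preferred definitional route is the same exclusion skeleton with both citations inlined -- strict convexity in $\boldsymbol{y}$ supplies arbitrarily near points of strictly larger value, and descent of $\tilde{F}$ along the manifold $(\boldsymbol{x},\boldsymbol{y}^*(\boldsymbol{x}))$ supplies near points of strictly smaller value (note this step silently uses continuity of $\boldsymbol{y}^*$, which the standing $C^1$ assumption provides). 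The genuinely valuable part of your proposal is the last paragraph: the flat case is not merely a weakness of the inertia route, it is a gap in the paper's own proof. If $\tilde{F}$ is locally constant -- e.g.\ $F(x,y)=y^2$, where $\tilde{F}\equiv 0$ -- then every $\boldsymbol{x}_0$ is a weak local maximum of $\tilde{F}$, yet $(\boldsymbol{x}_0,\boldsymbol{y}^*(\boldsymbol{x}_0))$ is a local \emph{minimum} of $F$, since $F(\boldsymbol{x},\boldsymbol{y})\ge\tilde{F}(\boldsymbol{x})=\tilde{F}(\boldsymbol{x}_0)$ for all nearby points; so under the weak reading the theorem is false, and the paper's claimed contradiction (a point of $\tilde{F}$ being simultaneously a local minimum and a weak local maximum) is not a contradiction at all. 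Your resolution -- reading "local maximum" so that $\tilde{F}$ takes strictly smaller values arbitrarily close to $\boldsymbol{x}_0$, which is automatic for the strict maxima produced in (ii)(a) -- is exactly the hypothesis the theorem needs, and on this point your write-up is more rigorous than the paper's.
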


\begin{proof}
\textbf{Proof of (i):}
This proof demonstrates that a local maximum in the reduced space must correspond to a saddle point in the original space. 

Let $\boldsymbol{x}_0$ be a local maximum of $\tilde{F}(\boldsymbol{x})$. By Theorem 1, the point $( (\boldsymbol{x}_0, \boldsymbol{y}^*(\boldsymbol{x}_0))$ is a stationary point of the original problem $F(\boldsymbol{x},\boldsymbol{y})$. From the strict convexity of the inner problem, we know from Proposition 1 that $F$ has no local maxima. Thus, $(\boldsymbol{x}_0, \boldsymbol{y}_0)$ cannot be a local maximum. 

We now show it cannot be a local minimum by contradiction. Assume $(\boldsymbol{x}_0, \boldsymbol{y}_0)$ is a local minimum of $F$. By Theorem 4, this would imply that $\boldsymbol{x}_0$ is a local minimum of $\tilde{F}$. This contradicts our premise that $\boldsymbol{x}_0$ is a local maximum.

Since the critical point $(\boldsymbol{x}_0, \boldsymbol{y}_0)$ is neither a local maximum nor a local minimum, it must be a saddle point.

\vspace{0.4cm}
\textbf{Proof of (ii):}
This proof shows that a saddle point that is maximal in its $\boldsymbol{x}$-directions transforms into a local maximum in the reduced space. 

(a) We determine the character of the stationary point $\boldsymbol{x}_0$ of $\tilde{F}$ by examining its Hessian, $\nabla^2 \tilde{F}(\boldsymbol{x}_0)$, which is the Schur complement of $\boldsymbol{H}_F$:
\[ \nabla^2 \tilde{F}(\boldsymbol{x}_0) = \nabla_{\boldsymbol{xx}}^2 F - (\nabla_{\boldsymbol{yx}}^2 F)^T (\nabla_{\boldsymbol{y}\boldsymbol{y}}^2 F)^{-1} (\nabla_{\boldsymbol{yx}}^2 F). \]
Let's analyze the terms on the right-hand side, evaluated at $(\boldsymbol{x}_0, \boldsymbol{y}_0)$:
\begin{itemize}
    \item The first term, $\nabla_{\boldsymbol{xx}}^2 F$, is \textbf{negative definite} by our premise.
    \item The second term, $(\nabla_{\boldsymbol{yx}}^2 F)^T (\nabla_{\boldsymbol{y}\boldsymbol{y}}^2 F)^{-1} (\nabla_{\boldsymbol{yx}}^2 F)$, is a \textbf{positive semi-definite} matrix.
\end{itemize}
The entire expression is a negative definite matrix minus a positive semi-definite matrix. This operation always results in a negative definite matrix.

Therefore, the Hessian of the reduced problem, $\nabla^2 \tilde{F}(\boldsymbol{x}_0)$, is negative definite. Since $\boldsymbol{x}_0$ is a stationary point of $\tilde{F}$ with a negative definite Hessian, it is a strict local maximum. This proves conclusion (a).

(b) Since $\boldsymbol{x}_0$ is a strict local maximum of the reduced objective $\tilde{F}(\boldsymbol{x})$, by Theorem 5(i) $(\boldsymbol{x}_0, \boldsymbol{y}_0)$ is a saddle point of the full objective $F(\boldsymbol{x},\boldsymbol{y})$.
\vspace{0.2cm}
\end{proof}

\begin{remark}
Theorem 5 provides a profound insight into the power of the variable elimination approach. Modern optimization theory suggests that the primary challenge in many large-scale non-convex problems is not the presence of poor-quality local minima, but the proliferation of saddle points that can dramatically slow down or stall first-order optimization algorithms. This theorem reveals how variable elimination reshapes these critical points. It establishes that \textbf{the local maxima encountered in the reduced landscape are, in fact, the benign representation of a specific class of saddle points from the original, high-dimensional space.} Local maxima are trivial to escape for many algorithms; for instance, a gradient-based method will naturally move away from them. By transforming a class of challenging saddle points into simple, repelling local maxima, variable elimination fundamentally simplifies the optimization landscape, robustly guiding algorithms toward higher-quality solutions.
% By transforming these challenging saddle points into simple, repelling local maxima, VarPro effectively simplifies the optimization landscape, which helps explain its empirically observed ability to escape regions where joint optimization would fail.
\end{remark}

\subsection{A Unifying Viewpoint: Hessian Inertia}

While the preceding theorems establish the correspondence between specific types of stationary points, a more general and powerful understanding can be achieved by analyzing the \emph{inertia} of the Hessian matrices. This provides a complete description of how the local curvature of the problem is fundamentally altered by variable elimination.

For a real symmetric matrix $\boldsymbol{A}$, its inertia is an ordered triplet, $\text{In}(\boldsymbol{A}) = (n_+, n_-, n_0)$, representing the number of positive, negative, and zero eigenvalues, respectively. The inertia precisely defines the character of a non-degenerate stationary point: a local minimum corresponds to an inertia of $(n, 0, 0)$, while a saddle point has $n_- > 0$.

A fundamental result connecting the Hessians is the \textbf{Haynsworth inertia additivity formula} \citep{haynsworth:68}. It applies to the Schur complement, and since the Hessian of the reduced problem, $\nabla^2 \tilde{F}$, \emph{is} the Schur complement of the inner problem's Hessian, $\boldsymbol{D} = \nabla^2_{\boldsymbol{y}\boldsymbol{y}}F$, within the full Hessian $\boldsymbol{H}_F$, we have:
\begin{equation}
    \text{In}(\boldsymbol{H}_F) = \text{In}(\boldsymbol{D}) + \text{In}(\nabla^2 \tilde{F}(\boldsymbol{x}))
\end{equation}

In our setting, the assumption of strong convexity for the inner problem means $\boldsymbol{D}$ is always positive definite. Its inertia is therefore fixed as $\text{In}(\boldsymbol{D}) = (q, 0, 0)$, where $q$ is the dimension of $\boldsymbol{y}$. This yields a remarkably simple yet powerful relationship:
\begin{equation}
    \label{eq:inertia_simplified}
    \text{In}(\boldsymbol{H}_F) = (q, 0, 0) + \text{In}(\nabla^2 \tilde{F}(\boldsymbol{x}))
\end{equation}

This equation provides a unifying explanation for all our previous findings. Let $\text{In}(\boldsymbol{H}_F) = (n_+, n_-, n_0)$ and $\text{In}(\nabla^2 \tilde{F}(\boldsymbol{x})) = (p_+, p_-, p_0)$. Equation \eqref{eq:inertia_simplified} implies $(n_+, n_-, n_0) = (q+p_+, p_-, p_0)$. This leads to two profound insights:

\begin{enumerate}
    \item \textbf{Conservation of Negative Curvature:} Critically, the number of negative eigenvalues is perfectly conserved between the original and reduced problems:
    \[
    n_-(\boldsymbol{H}_F) = p_-(\nabla^2 \tilde{F}(\boldsymbol{x}))
    \]
    This immediately explains why the transformation is stable. A local minimum ($p_-=0$) in the reduced space can only correspond to a point with $n_-=0$ (a minimum) in the original space; it can never arise from a saddle point ($n_->0$). Similarly, variable elimination does not create "new" unstable directions related to the primary variables; it only preserves them.

    \item \textbf{Systematic Landscape Simplification:} Equation \eqref{eq:inertia_simplified} reveals that variable elimination simplifies the optimization landscape by methodically "factoring out" the $q$ dimensions of guaranteed positive curvature from the inner problem. This systematically "purifies" the character of the stationary points. We can see how this single rule explains all our previous theorems:
    \begin{itemize}
        \item \textbf{If $\boldsymbol{x}_0$ is a local minimum of $\tilde{F}$}: $\text{In}(\nabla^2 \tilde{F})=(p,0,0)$. Then $\text{In}(\boldsymbol{H}_F) = (q,0,0) + (p,0,0) = (p+q, 0, 0)$. The corresponding point is a local minimum.
        
        \item \textbf{If $\boldsymbol{x}_0$ is a local maximum of $\tilde{F}$}: $\text{In}(\nabla^2 \tilde{F})=(0,p,0)$. Then $\text{In}(\boldsymbol{H}_F) = (q,0,0) + (0,p,0) = (q, p, 0)$. The point has both positive and negative curvature, hence it is a saddle point.
        
        \item \textbf{If $\boldsymbol{x}_0$ is a saddle point of $\tilde{F}$}: $\text{In}(\nabla^2 \tilde{F})=(p_+,p_-,p_0)$ with $p_->0$. Then $\text{In}(\boldsymbol{H}_F) = (q+p_+, p_-, p_0)$. The point also has $n_->0$, hence it remains a saddle point.
    \end{itemize}
\end{enumerate}

\section{Examples: Non-convex Matrix Factorization}
We illustrate our theoretical results, particularly Theorem~5 (Transformation of Saddle Points to Local Maxima), by exploring a simple yet non-trivial problem -- non-convex matrix factorization \citep{zhu:18, li:19, chi:19, valavi:20}. This problem is commonly used to study the algorithmic behavior of gradient descent methods \citep{zhu:19}, including phenomena such as implicit regularization \citep{du:18, liu:21} and the escape from saddle points \citep{jin:17}. 

Given a matrix $\boldsymbol{M}\in \mathbb{R}^{d_1 \times d_2}$, the general form of the problem is:
\begin{equation}
    \min_{\boldsymbol{X}\in \mathbb{R}^{d_1 \times r}, \boldsymbol{Y}\in \mathbb{R}^{d_2 \times r}} \frac{1}{2}\| \boldsymbol{X} \boldsymbol{Y}^T - \boldsymbol{M} \|_F^2.
\end{equation}
This objective has a bilinear structure and is separable, as the minimization is a convex least-squares problem in $\boldsymbol{Y}$ for a fixed $\boldsymbol{X}$, and vice versa. Despite its simple formulation, the joint landscape is non-convex and exhibits a complex geometry, including the presence of non-trivial saddle points.

\subsection{Rank-1 Case}
We consider the following non-convex optimization:
 \begin{equation}
       \min_{\boldsymbol{x}\in \Re^{d_1}, \boldsymbol{y}\in \Re^{d_2}} F(\boldsymbol{x}, \boldsymbol{y}) = \frac{1}{2}\| \boldsymbol x \boldsymbol y^T  - \boldsymbol M \|_F^2,
  \end{equation}
where $\boldsymbol{M}\in \Re^{d_1 \times d_2}$ is a rank-1 matrix and can be factorized as follows: $\boldsymbol{M} = \boldsymbol{u}_*\boldsymbol{v}_*^T$, where $\boldsymbol{u}_* \in \Re^{d_1}$, $\boldsymbol{v}_* \in \Re^{d_2}$. Without loss of generality, we assume $\| \boldsymbol{u}_* \| = \| \boldsymbol{v}_* \| = 1$.
$F$ has two types of stationary points: \\
(i) global optimum: $(\alpha \boldsymbol{u}_*, \alpha^{-1}\boldsymbol{v}_*,)$; \\
(ii) saddle points: any $(\boldsymbol{x},0)$ and $(0, \boldsymbol{y})$ that satisfy $\boldsymbol{x}^T\boldsymbol{u}_* = \boldsymbol{y}^T\boldsymbol{v}_* = \boldsymbol 0$.

For any fixed $\boldsymbol{x} \neq 0$, the optimal solution for the inner problem $\arg \min_{\boldsymbol{y}} F(\boldsymbol{x}, \boldsymbol{y})$ is 
 \begin{equation}
   \boldsymbol{y}^*(\boldsymbol{x}) = \frac{\boldsymbol{M}^T\boldsymbol{x}}{\|\boldsymbol{x}\|^2 }.
 \end{equation}
Substituting (30) into the original objective function (29), we obtain the following reduced objective function:
\begin{equation}
   f(\boldsymbol{x}) = \frac{1}{2}\left  \| \frac{\boldsymbol x \boldsymbol{x}^T \boldsymbol{M}}{\|\boldsymbol{x}\|^2}  - \boldsymbol M \right\|_F^2.
\end{equation}

 This can be simplified by letting $\boldsymbol{P}_{\boldsymbol{x}} = \frac{\boldsymbol{x}\boldsymbol{x}^T}{\|\boldsymbol{x}\|_2^2}$ be the projection matrix onto the span of $\boldsymbol{x}$:
\begin{align}
    f(\boldsymbol{x}) = \frac{1}{2}\| \boldsymbol{P}_{\boldsymbol{x}}\boldsymbol{M} - \boldsymbol{M} \|_F^2 = \frac{1}{2}\| (\boldsymbol{P}_{\boldsymbol{x}} - \boldsymbol{I})\boldsymbol{M} \|_F^2. \label{eq:reduced_mf}
\end{align}
The reduced problem seeks to find a direction $\boldsymbol{x}$ that minimizes the approximation error after projecting $\boldsymbol{M}$ onto that direction.

We now show that the saddle points of the original problem are transformed into local maxima in the reduced problem.

\begin{proof}
Let's consider a saddle point of the original problem of the form $(\boldsymbol{x}_s, \boldsymbol{0})$, where $\boldsymbol{x}_s \neq \boldsymbol{0}$ and $\boldsymbol{x}_s^T\boldsymbol{u}_* = 0$. The corresponding point in the reduced landscape is $\boldsymbol{x}_s$. Our goal is to show that $\boldsymbol{x}_s$ is a local maximum of the reduced function $f(\boldsymbol{x})$.

First, we can further simplify the expression for $f(\boldsymbol{x})$ using $\boldsymbol{M} = \boldsymbol{u}_*\boldsymbol{v}_*^T$ and the fact that $\|\boldsymbol{u}_*\|_2 = \|\boldsymbol{v}_*\|_2=1$:
\begin{align*}
    f(\boldsymbol{x}) &= \frac{1}{2}\| (\boldsymbol{I} - \boldsymbol{P}_{\boldsymbol{x}})\boldsymbol{M} \|_F^2 = \frac{1}{2} \text{Tr}\left( \boldsymbol{M}^T(\boldsymbol{I} - \boldsymbol{P}_{\boldsymbol{x}})^T(\boldsymbol{I} - \boldsymbol{P}_{\boldsymbol{x}})\boldsymbol{M} \right) \\
    &= \frac{1}{2} \text{Tr}\left( \boldsymbol{M}^T(\boldsymbol{I} - \boldsymbol{P}_{\boldsymbol{x}})\boldsymbol{M} \right) \quad (\text{since } \boldsymbol{I}-\boldsymbol{P_x} \text{ is a projection}) \\
    &= \frac{1}{2} \left( \text{Tr}(\boldsymbol{M}^T\boldsymbol{M}) - \text{Tr}(\boldsymbol{M}^T\boldsymbol{P}_{\boldsymbol{x}}\boldsymbol{M}) \right) \\
    &= \frac{1}{2} \left( \|\boldsymbol{M}\|_F^2 - \left\|\boldsymbol{P}_{\boldsymbol{x}}\boldsymbol{M}\right\|_F^2 \right) \\
    &= \frac{1}{2} \left( 1 - \frac{\|\boldsymbol{x}\boldsymbol{x}^T\boldsymbol{u}_*\boldsymbol{v}_*^T\|_F^2}{\|\boldsymbol{x}\|_2^4} \right) = \frac{1}{2} \left( 1 - \frac{|\boldsymbol{x}^T\boldsymbol{u}_*|^2 \|\boldsymbol{x}\boldsymbol{v}_*^T\|_F^2}{\|\boldsymbol{x}\|_2^4} \right) \\
    &= \frac{1}{2} \left( 1 - \frac{(\boldsymbol{x}^T\boldsymbol{u}_*)^2 \|\boldsymbol{x}\|_2^2 \|\boldsymbol{v}_*\|_2^2}{\|\boldsymbol{x}\|_2^4} \right) = \frac{1}{2} \left( 1 - \frac{(\boldsymbol{x}^T\boldsymbol{u}_*)^2}{\|\boldsymbol{x}\|_2^2} \right).
\end{align*}
The term $\frac{(\boldsymbol{x}^T\boldsymbol{u}_*)^2}{\|\boldsymbol{x}\|_2^2}$ is the squared cosine of the angle between $\boldsymbol{x}$ and $\boldsymbol{u}_*$. Let this be $\cos^2(\theta_{\boldsymbol{x},\boldsymbol{u}_*})$. So, $f(\boldsymbol{x}) = \frac{1}{2}\sin^2(\theta_{\boldsymbol{x},\boldsymbol{u}_*})$.

Now, we evaluate $f(\boldsymbol{x})$ at the point $\boldsymbol{x}_s$. By definition of this saddle point, $\boldsymbol{x}_s^T\boldsymbol{u}_* = 0$. Therefore,
\[
f(\boldsymbol{x}_s) = \frac{1}{2} \left( 1 - \frac{0^2}{\|\boldsymbol{x}_s\|_2^2} \right) = \frac{1}{2}.
\]
For any arbitrary point $\boldsymbol{x} \in \mathbb{R}^{d_1}$, the term $\frac{(\boldsymbol{x}^T\boldsymbol{u}_*)^2}{\|\boldsymbol{x}\|_2^2} \ge 0$. Thus,
\[
f(\boldsymbol{x}) = \frac{1}{2} \left( 1 - \frac{(\boldsymbol{x}^T\boldsymbol{u}_*)^2}{\|\boldsymbol{x}\|_2^2} \right) \le \frac{1}{2}.
\]
This shows that $f(\boldsymbol{x}_s) = 1/2$ is the global maximum value of the reduced function $f(\boldsymbol{x})$. Therefore, the point $\boldsymbol{x}_s$ is not just a local maximum, but a global maximum of the reduced problem. This explicitly illustrates how the variable elimination method transforms a saddle point of the original problem into a well-behaved maximizer in the reduced landscape, confirming the conclusion of Theorem~5.
\end{proof}

To provide a concrete and visual illustration of our theoretical findings, we analyze a simple 2$\times$2 matrix factorization problem.

Consider the rank-1 matrix factorization problem where the target matrix ($\boldsymbol{u}_* = (1, 0)^T$, $\boldsymbol{v}_* = (0, 1)^T$)is:
\[
\boldsymbol{M} = \begin{pmatrix} 0 & 1 \\ 0 & 0 \end{pmatrix}.
\]
The original optimization problem is to find $\boldsymbol{x} \in \mathbb{R}^2$ and $\boldsymbol{y} \in \mathbb{R}^2$ that solve:
\begin{equation}
    \min_{\boldsymbol{x}, \boldsymbol{y}} F(\boldsymbol{x}, \boldsymbol{y}) = \frac{1}{2}\left\| \boldsymbol{x}\boldsymbol{y}^T - \begin{pmatrix} 0 & 1 \\ 0 & 0 \end{pmatrix} \right\|_F^2.
\end{equation}

The stationary points ($\boldsymbol{x} \neq 0$) of this original problem are:
\begin{itemize}
    \item \textbf{Global Minima:} The set of points $(\alpha\boldsymbol{u}_*, \alpha^{-1}\boldsymbol{v}_*)$ for any $\alpha \neq 0$.
    \item \textbf{Saddle Points:} The set of points $(\boldsymbol{x}_s, \boldsymbol{0})$ where $\boldsymbol{x}_s$ is orthogonal to $\boldsymbol{u}_*$. In this case, any vector of the form $(0, c)^T$ for $c \neq 0$.
\end{itemize}

By applying variable elimination, the reduced objective function $f(\boldsymbol{x}) = \frac{1}{2}\| (\boldsymbol{I} - \boldsymbol{P_x})\boldsymbol{M} \|_F^2$ simplifies to:
\begin{equation}
    f(\boldsymbol{x}) = f(x_1, x_2) = \frac{1}{2} \left( 1 - \frac{x_1^2}{x_1^2 + x_2^2} \right).
\end{equation}
The landscape of this reduced function is plotted in Figure \ref{fig:landscape}.

\begin{figure}[h!]
    \centering
    % This is a placeholder for your plot. 
    % In your actual LaTeX document, you would use \includegraphics
    % to include the image file we generated.
    \includegraphics[height=0.6\textwidth]{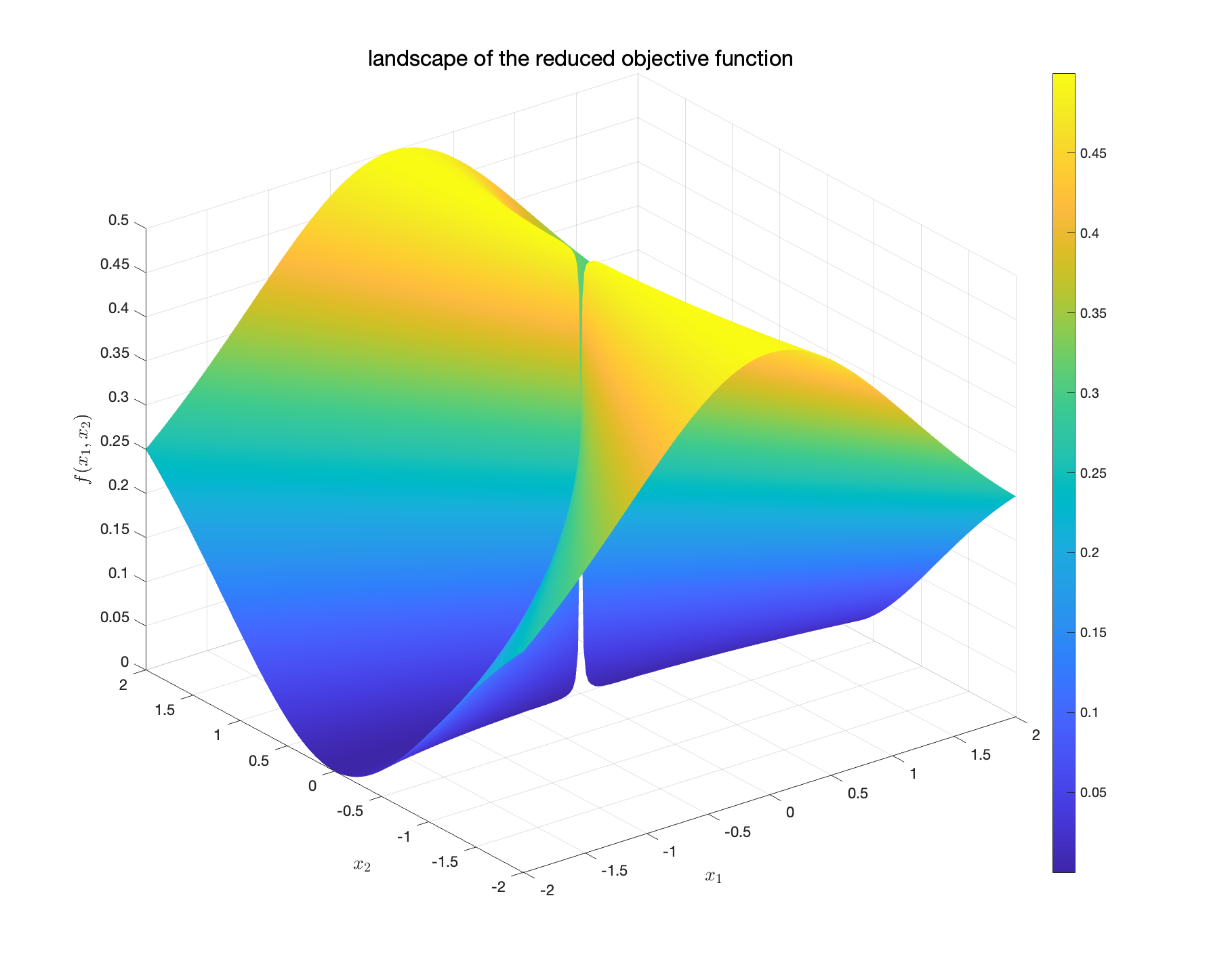}
    \caption{The 3D landscape of the reduced objective function $f(\boldsymbol{x})$ for the matrix $\boldsymbol{M} = (0,1; 0,0)$. The function's value depends only on the direction of $\boldsymbol{x}$, not its magnitude. The valley along the $x_1$-axis corresponds to the global minima, while the ridge along the $x_2$-axis corresponds to the global maxima.}
    \label{fig:landscape}
\end{figure}

This 3D plot provides a clear visualization of our theoretical findings and perfectly illustrates the conclusion of Theorem 5.

\begin{itemize}
    \item \textbf{Landscape Ridges and Valleys:}    \begin{itemize}
        \item \textbf{Valleys (Global Minima):} The lowest points on the surface (where $f(\boldsymbol{x})=0$) are along the $x_1$-axis (where $x_2=0$). This corresponds to the direction of the principal left singular vector $\boldsymbol{u}_*=(1,0)^T$. When $\boldsymbol{x}$ aligns with this "true" signal direction, the reduced problem finds the optimal solution, perfectly illustrating Theorem 4 (Preservation of Minima).
        \item \textbf{Ridges (Global Maxima):} The highest points on the surface (where $f(\boldsymbol{x})=0.5$) are along the $x_2$-axis (where $x_1=0$). This direction, which is spanned by the vector $(0,1)^T$, is precisely the subspace orthogonal to $\boldsymbol{u}_*$.
    \end{itemize}
    \item \textbf{Verification of Saddle Point Transformation:}
    In the original problem, the saddle points we analyzed are of the form $(\boldsymbol{x}_s, \boldsymbol{0})$, where $\boldsymbol{x}_s$ is orthogonal to $\boldsymbol{u}_*$. In this 2D example, this means $\boldsymbol{x}_s$ must lie on the $x_2$-axis. As clearly shown in the plot, the entire $x_2$-axis forms the ridge of global maxima for the reduced function $f(\boldsymbol{x})$.
\end{itemize}
This perfectly confirms our theory: the saddle points of the original problem have been transformed into the global maxima of the reduced problem. This provides a clear geometric reason why the variable elimination approach is effective at avoiding these specific saddles --- an optimization algorithm operating on the reduced landscape will be naturally repelled from these maximal ridges.

\subsection{The Rank-$r$ Case: A Grassmannian Perspective}
We now extend the analysis to the more general rank-$r$ case, adopting the language of \textbf{Grassmannian manifolds} to offer a more profound geometric interpretation. This perspective is motivated by the core insight from \cite{valavi:20}, which posits that the reduced landscape is intrinsically defined on the manifold of all $r$-dimensional subspaces.

The original problem is formulated as:
\begin{equation}
    \min_{\boldsymbol{X}\in \mathbb{R}^{d_1 \times r}, \boldsymbol{Y}\in \mathbb{R}^{d_2 \times r}} F(\boldsymbol{X}, \boldsymbol{Y}) = \frac{1}{2}\| \boldsymbol{X} \boldsymbol{Y}^T - \boldsymbol{M} \|_F^2,
\end{equation}
where $\boldsymbol{M} = \boldsymbol{U}_*\boldsymbol{\Sigma}\boldsymbol{V}_*^T$ is its Singular Value Decomposition (SVD), with $\boldsymbol{U}_* \in \mathbb{R}^{d_1 \times r}$ and $\boldsymbol{V}_* \in \mathbb{R}^{d_2 \times r}$ having orthonormal columns.

\paragraph{The Reduced Problem on the Grassmannian}
For any given full-rank matrix $\boldsymbol{X}$, the optimal $\boldsymbol{Y}$ is given by $\boldsymbol{Y}^*(\boldsymbol{X}) = \boldsymbol{M}^T\boldsymbol{X}(\boldsymbol{X}^T\boldsymbol{X})^{-1}$. Substituting this back into the objective yields the reduced function:
\begin{equation}
    f(\boldsymbol{X}) = \frac{1}{2}\| (\boldsymbol{I} - \boldsymbol{P_X})\boldsymbol{M} \|_F^2,
\end{equation}
where $\boldsymbol{P_X} = \boldsymbol{X}(\boldsymbol{X}^T\boldsymbol{X})^{-1}\boldsymbol{X}^T$ is the orthogonal projection operator onto the column space of $\boldsymbol{X}$, denoted $\mathcal{C}(\boldsymbol{X})$.

The crucial observation is that the value of $f(\boldsymbol{X})$ depends only on the subspace $\mathcal{C}(\boldsymbol{X})$, not on the specific basis $\boldsymbol{X}$ that spans it. That is, for any invertible matrix $\boldsymbol{R} \in \mathbb{R}^{r \times r}$, we have $f(\boldsymbol{X}) = f(\boldsymbol{XR})$. Consequently, $f$ is naturally defined on the \textbf{Grassmannian manifold}, $Gr(r, d_1)$, which is the space of all $r$-dimensional linear subspaces of $\mathbb{R}^{d_1}$. Letting $\mathcal{S} \in Gr(r, d_1)$ denote such a subspace, the reduced problem can be elegantly expressed as:
\begin{equation}
    \min_{\mathcal{S} \in Gr(r, d_1)} f(\mathcal{S}) = \frac{1}{2}\| (\boldsymbol{I} - \boldsymbol{P}_{\mathcal{S}})\boldsymbol{M} \|_F^2,
\end{equation}
where $\boldsymbol{P}_{\mathcal{S}}$ is the orthogonal projector onto the subspace $\mathcal{S}$.

\paragraph{Mapping of Saddle Points under the Landscape Transformation}
In the landscape of the original problem $F(\boldsymbol{X}, \boldsymbol{Y})$, an important class of strict saddle points consists of points of the form $(\boldsymbol{X}_s, \boldsymbol{0})$. These saddle points are characterized by $\boldsymbol{X}_s$ being full-rank, yet its column space $\mathcal{C}(\boldsymbol{X}_s)$ is orthogonal to the true signal subspace of $\boldsymbol{M}$, which is $\mathcal{C}(\boldsymbol{U}_*)$. Formally, this condition is written as $\mathcal{C}(\boldsymbol{X}_s) \perp \mathcal{C}(\boldsymbol{U}_*)$, or more compactly, $\boldsymbol{U}_*^T\boldsymbol{X}_s = \boldsymbol{0}$.

We now demonstrate that under the variable projection mapping, the subspaces $\mathcal{S}_s = \mathcal{C}(\boldsymbol{X}_s)$ corresponding to these saddle points become the global maximizers of the reduced function $f(\mathcal{S})$ on the Grassmannian.

\begin{proof}
The reduced objective function $f(\mathcal{S})$ can be decomposed into a constant and a term dependent on the subspace $\mathcal{S}$:
$$f(\mathcal{S}) = \frac{1}{2} \left( \|\boldsymbol{M}\|_F^2 - \|\boldsymbol{P}_{\mathcal{S}}\boldsymbol{M}\|_F^2 \right).$$
Using the SVD of $\boldsymbol{M}$ and the property that $\boldsymbol{V}_*^T\boldsymbol{V}_* = \boldsymbol{I}_r$, we analyze the second term:
$$\|\boldsymbol{P}_{\mathcal{S}}\boldsymbol{M}\|_F^2 = \|\boldsymbol{P}_{\mathcal{S}}\boldsymbol{U}_*\boldsymbol{\Sigma}\boldsymbol{V}_*^T\|_F^2 = \|\boldsymbol{P}_{\mathcal{S}}\boldsymbol{U}_*\boldsymbol{\Sigma}\|_F^2 = \text{Tr}(\boldsymbol{\Sigma}^T \boldsymbol{U}_*^T \boldsymbol{P}_{\mathcal{S}} \boldsymbol{U}_* \boldsymbol{\Sigma}).$$
This term represents the energy of the signal component of $\boldsymbol{M}$ projected onto the subspace $\mathcal{S}$. Its magnitude depends on the alignment between $\mathcal{S}$ and the true signal subspace $\mathcal{C}(\boldsymbol{U}_*)$, a relationship that can be quantified by the \textbf{principal angles} between the two subspaces.

Now, consider the subspace $\mathcal{S}_s = \mathcal{C}(\boldsymbol{X}_s)$ corresponding to a saddle point. By definition, this subspace is orthogonal to $\mathcal{C}(\boldsymbol{U}_*)$. This implies that the projection of any vector from $\mathcal{C}(\boldsymbol{U}_*)$ onto $\mathcal{S}_s$ is the zero vector. Consequently, the action of the projection operator is:
$$\boldsymbol{P}_{\mathcal{S}_s}\boldsymbol{U}_* = \boldsymbol{0}.$$
Substituting this result into the expression for $f(\mathcal{S}_s)$ yields:
$$f(\mathcal{S}_s) = \frac{1}{2} \left( \|\boldsymbol{M}\|_F^2 - \|\boldsymbol{0} \cdot \boldsymbol{\Sigma}\|_F^2 \right) = \frac{1}{2}\|\boldsymbol{M}\|_F^2 = \frac{1}{2}\sum_{i=1}^r \sigma_i^2.$$
For any arbitrary subspace $\mathcal{S}$ on the Grassmannian, the squared Frobenius norm $\|\boldsymbol{P}_{\mathcal{S}}\boldsymbol{M}\|_F^2$ is non-negative. Therefore,
$$f(\mathcal{S}) = \frac{1}{2} \left( \|\boldsymbol{M}\|_F^2 - \|\boldsymbol{P}_{\mathcal{S}}\boldsymbol{M}\|_F^2 \right) \le \frac{1}{2}\|\boldsymbol{M}\|_F^2.$$
This inequality shows that $f(\mathcal{S}_s)$ is the global maximum value of the reduced function.
\end{proof}

\begin{remark}
    This analysis reveals a profound geometric transformation. The set of strict saddle points in the ambient Euclidean space, defined by rank deficiency and orthogonality to the true signal subspace, is precisely mapped to the set of global maximizers on the Grassmannian manifold via variable projection. This fundamentally clarifies why this reduction method effectively circumvents these particular saddles: an optimization algorithm on the reduced landscape is naturally repelled from these "peaks" and is instead guided towards the "valleys" (the global minima) that contain the true signal directions. This provides a perfect, high-dimensional illustration of Theorem 5, revealing that the power of variable elimination lies in its ability to fundamentally re-sculpt the optimization landscape, transforming treacherous saddle points into easily avoidable maxima. %A complete analysis of saddle point transformation is provided in Appedix D. 
\end{remark} 

\section{Numerical Experiments}
This section provides an intuitive visualization and empirical validation of the theoretical findings presented earlier. We first use simple, two-parameter networks to explicitly visualize the cost function landscape transformation, and then demonstrate the practical consequences of this transformation through a statistical comparison of algorithm performance. 

\subsection{Visualizing the Transformed Landscape}
To visualize the geometry of the full cost function versus that of the reduced cost function, we employ simple two-parameter multi-layer perception (MLP) and radial-basis-function (RBF) neural networks as a "mathematical microscope". The two models, adopted from \cite{mcloone:02}, are:
\begin{enumerate}
    \item \textbf{MLP with Sigmoid Activation:} The network output is given by $y = w_L \cdot \sigma(w_N x)$, where is the logistic sigmoid function.
    \item \textbf{RBF Network with Gaussian Activation:} The output is given by $y = w_L \cdot \exp(-(x - w_N)^2)$. Here, $w_N$ acts as the center of the Gaussian kernel.
\end{enumerate}

%$$y = w_L\cdot sig(w_N\cdot x),$$
%$$y = w_L\cdot exp(-(w_N-x)^2),$$
In both models, $w_L$ and $w_N$ are the linear and nonlinear parameters, respectively. A small dataset\footnote{The training data consisted of inputs $X = [-0.5  -0.2  0.0  0.3  0.8  0.5  -0.11]$ and corresponding targets $Y = [-0.02 -0.03 -0.01 -0.03 0.02 0.02 0.05]$ are used for training MLP and $X = [1.4  -2.1  0.7  0.5  1.2 -1.5 1 2.1 -0.3 -0.6 -1 2.12]$, $Y = [-0.2 -0.6 0.2 0.2 -1.5 0.1 -0.5 -1 1 2 3 -2.1]$ for RBF.} was used to generate the cost functions. For each model, we visualize and compare two cost functions based on the Squared Error (SE). The resulting landscapes are presented in Figure \ref{fig:sigmoid_landscape}.

\begin{figure}[!t]
    \centering
    \includegraphics[width=1.1\textwidth]{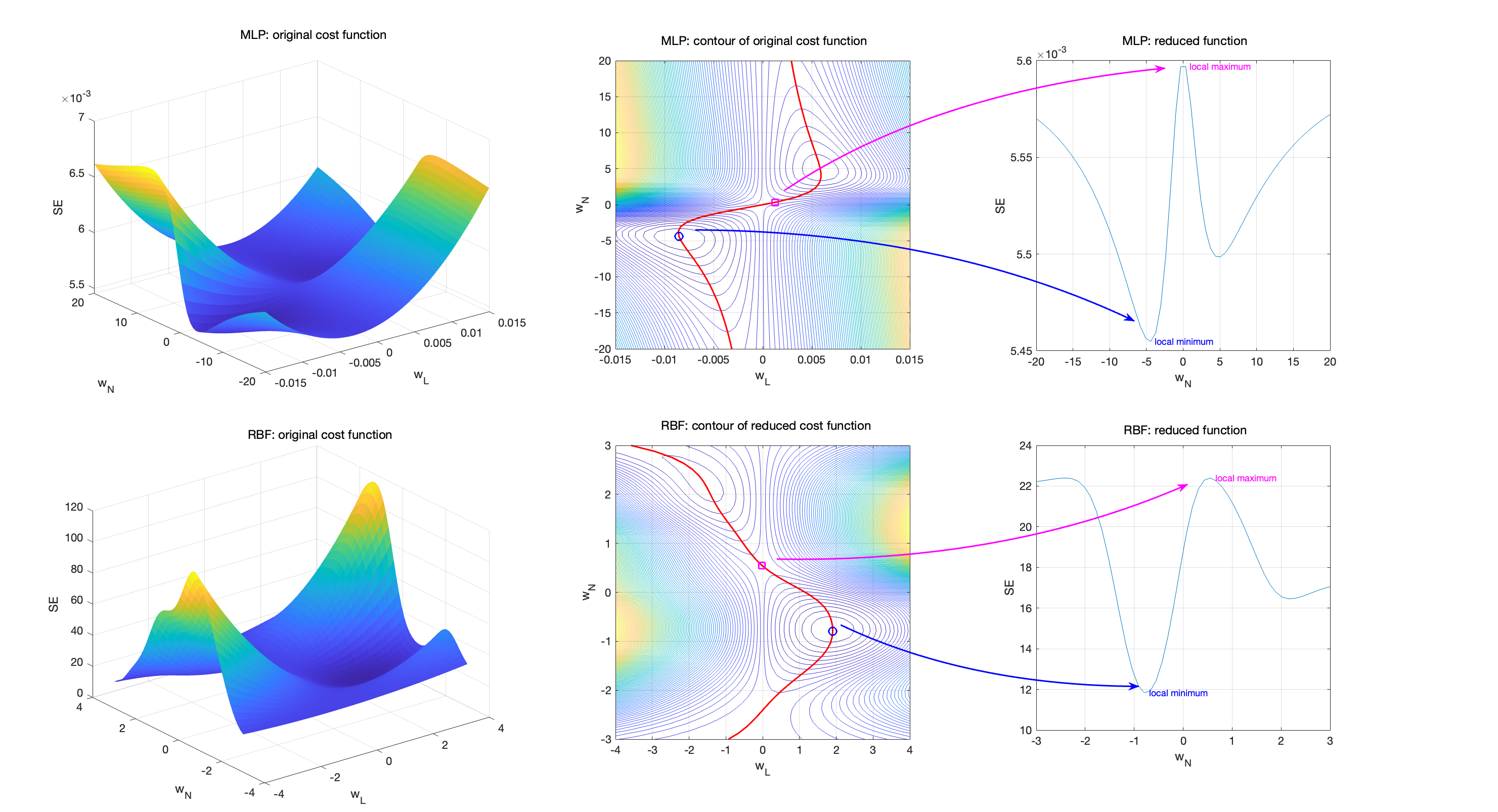} % Placeholder for Contour Plot
    \caption{Visualization of the original cost function and the reduced cost function for the MLP (top row) and RBF (bottom row) models. \textbf{Left:} The 3D surface of the original cost function. \textbf{Middle:} The contour plot of the original cost function, where the red curve represents the manifold of optimal linear parameters. \textbf{Right:} The corresponding one-dimensional reduced cost function. Arrows explicitly illustrate the mapping of stationary points.}
    \label{fig:sigmoid_landscape}
\end{figure}

Figure \ref{fig:sigmoid_landscape} provides a direct, visual confirmation of our theoretical framework. The transformation is particularly striking for the MLP model (top row), where the complex, winding valley of the original landscape is simplified into a well-behaved 1D function. Most importantly, the arrows explicitly show a saddle point in the original space being mapped to a simple local maximum in the reduced space---a textbook illustration of Theorem 5. This inertia-preserving transformation offers a clear, geometric explanation for the robust performance of variable elimination algorithms, showing how they simplify the landscape by converting challenging saddle points into easily avoidable maxima.

\subsection{Training of feed-forward neural networks}
The training of neural networks is a canonical example of high-dimensional, non-convex optimization. To demonstrate the practical impact of variable elimination methods, we apply the joint method and VarPro to the training of a single-hidden-layer feed-forward neural network with squared loss function. Both approaches utilize the Levenberg-Marquardt (LM) algorithm as the core optimizer, applied to the original and reduced objective functions, respectively. 

The experimental setup follows the "teacher-student" paradigm. A "teacher" network with 5 hidden neurons using the logistic sigmoid activation function and a linear output layer was created with known, randomly generated weights. This teacher network was then used to generate a dataset consisting of 300 input-output pairs. Subsequently, a "student" network of identical architecture was trained by each algorithm, starting from a different random initialization for each trial. The goal is to evaluate how effectively each method can solve the optimization problem and recover the underlying teacher model.

To account for the high sensitivity of non-convex optimization to initial conditions, we conducted 100 training runs for each algorithm, each starting from a different, randomly generated set of initial parameters. The distribution of the final residual errors from these 100 runs is summarized in the box plot shown in Figure \ref{fig:boxplot_comparison}. The performance metric is the logarithm of the final Residual Sum of Squares (lg RSS); lower values signify a better fit to the data.

\begin{figure}[!t]
    \centering
    \includegraphics[width=0.6\textwidth]{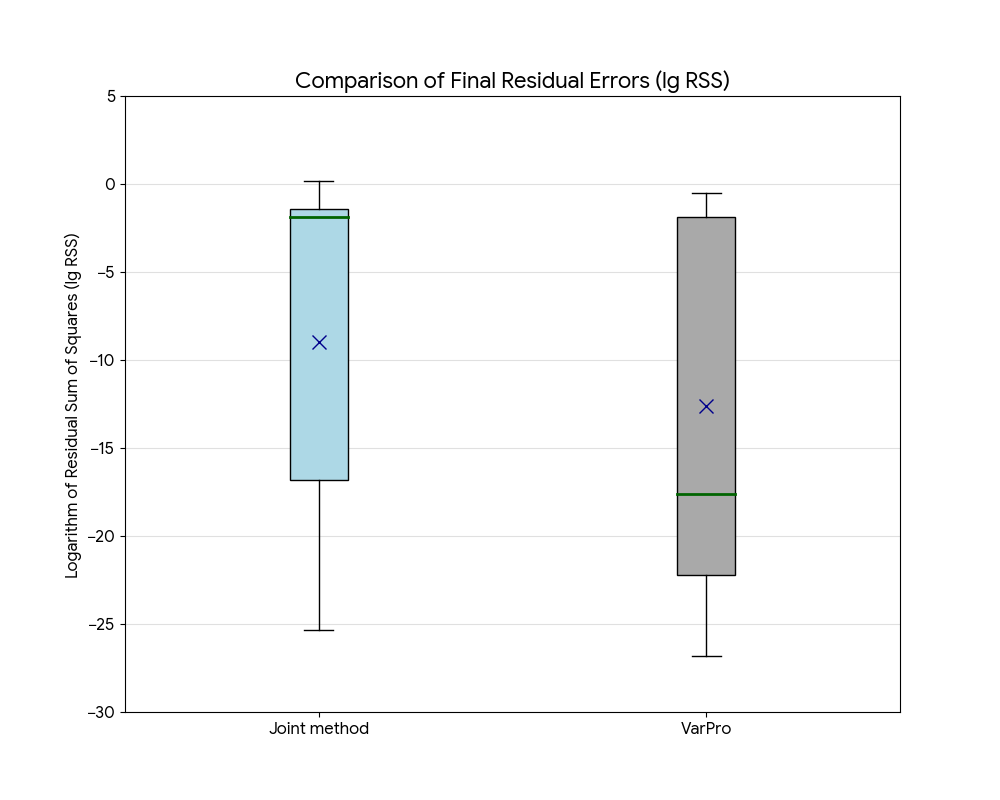} 
    \caption{A box plot comparing the distribution of the final residual sum of squares (RSS) for the joint Levenberg-Marquardt (Joint LM) algorithm and the Variable Projection (VarPro) algorithm after 100 training runs from random initializations. The VarPro method demonstrates significantly lower median error and superior consistency.}
    \label{fig:boxplot_comparison}
\end{figure}

The results in Figure \ref{fig:boxplot_comparison} reveal a dramatic performance gap between the two methods. This aligns with insights from modern optimization theory, which posits that in high-dimensional non-convex problems, the primary difficulty is not poor local minima, but the proliferation of saddle points \citep{dauphin:14}. It is argued that critical points with errors significantly higher than the global minimum are exponentially likely to be saddle points \citep{choromanska:15}, whereas true local minima tend to have errors very close to that of the global optimum.

Our experimental results illustrate  this perspective:

\begin{itemize}
    \item \textbf{The Failure Mode of Joint Optimization}: The performance of the joint optimization method is highly erratic. A large portion of its results, indicated by the high median and wide variance, terminate at very high error values (lg RSS $>$ -2). According to the theory \citep{choromanska:15}, these are not high-cost local minima but are overwhelmingly likely to be regions where the optimizer has been stalled by the complex plateaus surrounding saddle points.
    
    \item \textbf{The Success of VarPro}: In stark contrast, the VarPro algorithm consistently converges to solutions with very low residual error (median lg RSS near -20). These outcomes correspond to high-quality solutions that, by the same logic, reside in the basin of attraction of the true \textbf{global or near-global minima}.
\end{itemize}

This experiment provides strong empirical evidence for our central thesis. The standard joint optimization method is highly susceptible to the saddle point problem that plagues neural network training. Conversely, the Variable Projection method---by fundamentally simplifying the optimization landscape as proven in our theoretical analysis (Theorem 5)---effectively circumvents these saddle point traps. This reshaping of the cost surface enables the optimizer to reliably find paths to desirable, low-error minima, explaining its superior performance and robustness.

\subsection{Training of Residual Networks}
Having demonstrated the principles of landscape simplification on shallow networks, a critical question remains: do these benefits extend to the complex, high-dimensional problems in modern deep learning? To address this, we investigate the training of a deep Residual Network (ResNet), a cornerstone of modern architectures.

\subsubsection{Experimental Setup}

The experiment compares the performance of two training strategies on a function approximation task, where the goal is to fit the function $y = \sin(2\pi x_1)*\cos(2\pi x_2)$.

\begin{itemize}
    \item \textbf{Model}: We employ a deep ResNet composed of an input layer, a sequence of \textbf{8 residual blocks}, and a final linear output layer. Each hidden layer within the architecture has a width of 64 units and uses the `tanh` activation function.
    
%    \item \textbf{Algorithms}: We compare the standard, end-to-end \textbf{Gradient Descent (GD)} approach (using the Adam optimizer \cite{Kingma2015}) against the \textbf{Least-Squares Gradient Descent (LSGD)} \cite{Cyr2020}, which embodies the Variable Projection principle. The GD method jointly optimizes all network parameters, while the LSGD method iteratively updates the nonlinear parameters (all layers except the last) with Adam and analytically solves for the optimal linear parameters (the final layer) in each step.
    \item \textbf{Algorithms}: We compare standard end-to-end \textbf{Gradient Descent (GD)} against a hybrid method embodying the Variable Projection principle, termed \textbf{Least-Squares Gradient Descent (LSGD)} \citep{cyr:20}. The GD method jointly optimizes all network parameters using Adam \citep{kingma:15}, while the LSGD method iteratively updates the nonlinear parameters (all hidden layers) with Adam and analytically solves for the optimal linear parameters (the final layer) in each step.

    \item \textbf{Methodology}: We conduct 16 independent trials for each algorithm from different random initializations, each for 10,000 epochs. Learning rates were tuned independently for each method to ensure a fair comparison.
\end{itemize}

\subsubsection{Results and Analysis}
The training dynamics, averaged over the 16 trials, are presented in Figure \ref{fig:resnet_training}. The plot shows the mean training loss (on a log10 scale) as a function of the training step (on a log scale). The shaded regions represent the standard deviation across the trials, providing a clear measure of each algorithm's stability and robustness to initialization.

\begin{figure}[!t]
    \centering
    % In your document, please use the actual image file generated by your Python script.
    \includegraphics[width=0.6\textwidth]{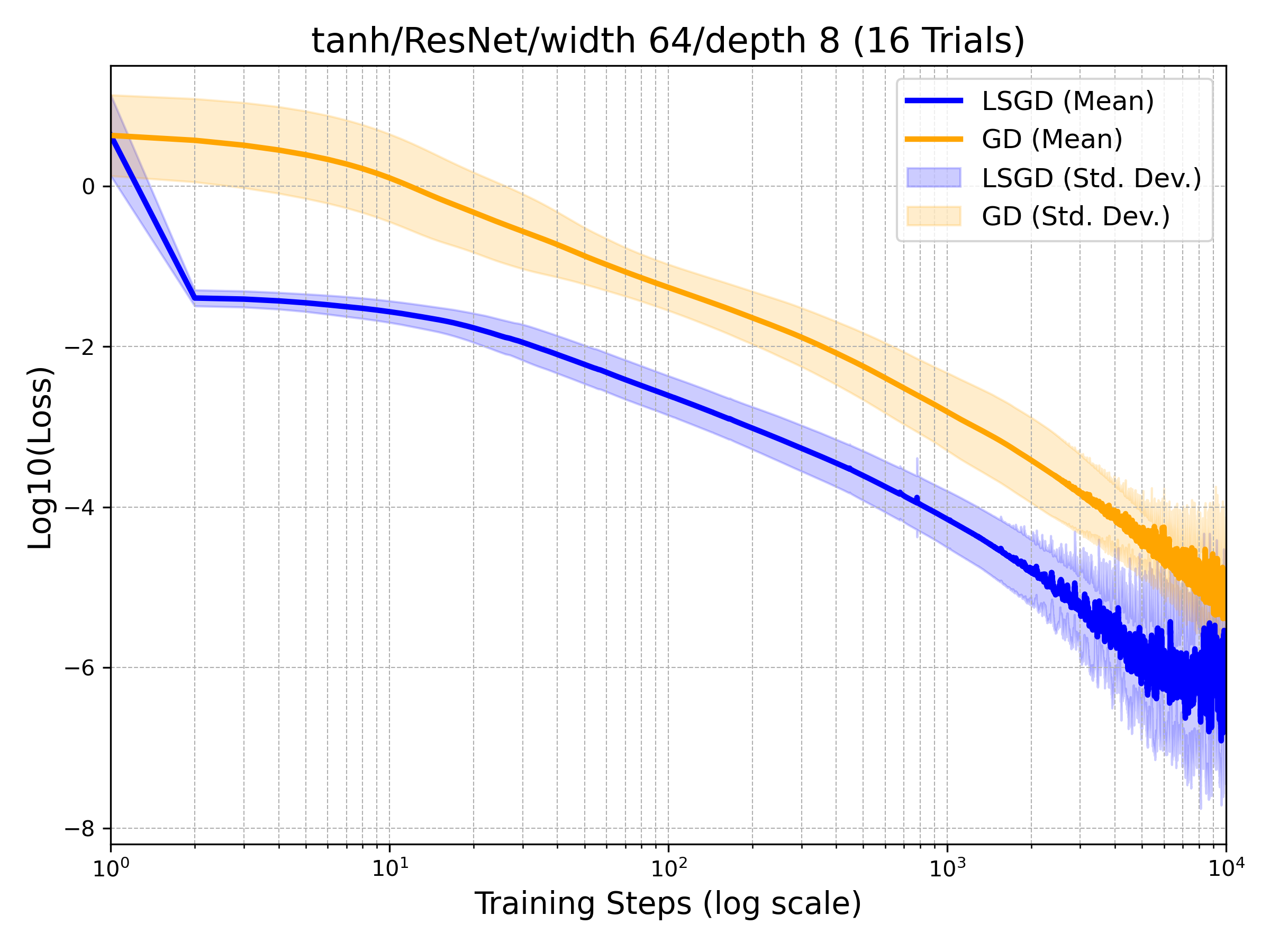} 
    \caption{Comparison of training dynamics for standard Gradient Descent (GD) and the Least-Squares Gradient Descent (LSGD/VarPro) on a deep ResNet. The solid lines represent the mean loss over 16 trials; shaded areas are the standard deviation.}
    \label{fig:resnet_training}
\end{figure}

The results are conclusive and demonstrate that the advantages of variable elimination extend to deep learning models.

\begin{enumerate}
    \item \textbf{Convergence Speed and Final Error}: The LSGD (VarPro) method exhibits a dramatic initial drop in loss due to the optimal least-squares step on the output weights. It converges significantly faster and achieves a lower final mean error than standard GD.

    \item \textbf{Robustness and Stability}: Most strikingly, the shaded region for the standard GD algorithm is notably wide, implying that its performance is sensitive to the initial parameters. Compared to the wide variance of GD, the standard deviation for LSGD is markedly smaller, indicating that it converges to a similar high-quality solution with much greater consistency from different starting points.
\end{enumerate}

These results provide a compelling demonstration of our central thesis in a practical deep learning setting. The instability and slower convergence of standard gradient descent are classic symptoms of an optimizer struggling in a high-dimensional landscape. In contrast, the exceptional speed and robustness of the LSGD approach are powerful empirical evidence of the underlying landscape simplification predicted by our theory. By eliminating the linear parameters, VarPro does more than accelerate training; it fundamentally reshapes the cost surface. This allows the optimizer to consistently and efficiently find high-quality minima, showcasing the immense potential of variable elimination as a principle for designing more robust and effective training algorithms for modern deep networks.

\section{Conclusion}

This paper has sought to provide a deep and principled explanation for the widely observed, yet not fully understood, success of variable elimination algorithms in machine learning. Our work moves beyond simple observations of faster convergence and instead investigates the fundamental mechanism by which these methods achieve superior performance and robustness. The central thesis of our research is that the efficacy of the Variable Elimination approach stems from its ability to fundamentally reshape and simplify the underlying optimization landscape.

Our theoretical analysis, grounded in the properties of Schur complements and Hessian inertia, has rigorously established this claim. We first proved a direct correspondence between the stationary points of the original problem and the reduced-dimension problem. The most significant theoretical contribution is the discovery, via the Haynsworth inertia additivity formula, that the number of negative-curvature directions is perfectly conserved during the transformation. This principle leads to a crucial consequence: a specific class of saddle points in the original landscape--those that are maximal in the primary directions and minimal in the eliminated directions--are systematically converted into local maxima in the reduced landscape.

These theoretical insights were validated across a hierarchy of numerical experiments. In low-dimensional networks, we explicitly visualized a saddle point transforming into a local maximum. This phenomenon's practical impact was confirmed in statistical trials on shallow networks and scaled up to modern deep Residual Networks, where the VarPro-based method demonstrated dramatically superior convergence speed and robustness compared to the standard end-to-end approach.

In conclusion, this work provides a clear and compelling answer to why VarPro is so effective: it solves an easier problem. By analytically optimizing a subset of parameters, it does not merely reduce the dimensionality of the search space but also fundamentally improves its geometry. By transforming "trapping" saddle points into "repelling" local maxima, the algorithm can more effectively navigate the error surface to find superior solutions.

Future work could extend this analysis to more complex, deep learning architectures. Investigating whether similar landscape simplifications occur in networks with multiple hidden layers or different activation functions remains an open and promising area of research. Additionally, exploring how these geometric properties affect the behavior of stochastic, gradient-based optimizers could yield valuable insights for developing more robust training algorithms for the next generation of neural networks.

% Acknowledgements and Disclosure of Funding should go at the end, before appendices and references
% Here is a citation \cite{chow:68}.
\acks{This work was supported by the National Nature Science Foundation of China under Grant 62576183 and Grant 62173091.}

% Manual newpage inserted to improve layout of sample file - not
% needed in general before appendices/bibliography.

\newpage

\appendix
\setcounter{equation}{0}
\numberwithin{equation}{section}
\renewcommand{\theequation}{\thesection\arabic{equation}}
\section{}
  Consider the separable nonlinear least squares problems with an objective function
\begin{equation}
    \|\boldsymbol\epsilon(\boldsymbol{x}, \boldsymbol{y})\|_2^2 = \frac{1}{2}\|\boldsymbol G(\boldsymbol{x})\boldsymbol{y} - \boldsymbol z\|_2^2
\end{equation}
We denote by $\boldsymbol{J}_{\boldsymbol{x}}(\boldsymbol{x}, \boldsymbol{y})=\partial \boldsymbol\epsilon(\boldsymbol{x}, \boldsymbol{y})/\partial \boldsymbol{x}$ and $\boldsymbol{J}_{\boldsymbol{y}}(\boldsymbol{x})=\partial \boldsymbol\epsilon(\boldsymbol{x}, \boldsymbol{y})/{\partial \boldsymbol{y}}$  the Jacobian matrices with respect to $\boldsymbol{x}$ and $\boldsymbol{y}$, respectively. Note that 
 $\boldsymbol{J}_{\boldsymbol{y}}(\boldsymbol{x})$ depends only on $\boldsymbol{x}$, since $\boldsymbol\epsilon(\boldsymbol{x}, \boldsymbol{y})$ is linear in $\boldsymbol{y}$. 
 
 To derive update rules, we can linearize the residual around the current iterate $(\boldsymbol{x}_k,\boldsymbol{y}_k)$. Using a first-order Taylor expansion (essentially a Gauss-Newton approximation), the residual can be approximated as:
\begin{equation}
	\boldsymbol\epsilon(\boldsymbol{x}_k+\Delta \boldsymbol{x},\boldsymbol{y}_k + \Delta \boldsymbol{y}) \approx 
	\boldsymbol\epsilon(\boldsymbol{x}_k,\boldsymbol{y}_k ) + \boldsymbol{J}_{\boldsymbol{x}}(\boldsymbol{x}_k,\boldsymbol{y}_k)\Delta \boldsymbol{x} + \boldsymbol{J}_{\boldsymbol{y}}(\boldsymbol{x}_k) \Delta \boldsymbol{y}.
\end{equation}

If we fix $\Delta \boldsymbol{x}$ (i.e. assume a tentative update in the nonlinear parameters), the optimal adjustment $\Delta \boldsymbol{y}$ can be obtained by minimizing
\begin{equation}
    \| \boldsymbol\epsilon(\boldsymbol{x}_k,\boldsymbol{y}_k ) + \boldsymbol{J}_{\boldsymbol{x}}(\boldsymbol{x}_k,\boldsymbol{y}_k)\Delta \boldsymbol{x} + \boldsymbol{J}_{\boldsymbol{y}}(\boldsymbol{x}_k) \Delta \boldsymbol{y} \| .
\end{equation}
The solution is $\Delta \boldsymbol{y} = - \boldsymbol{J}_{\boldsymbol{y}}(\boldsymbol{x}_k)^{\dagger}(\boldsymbol\epsilon(\boldsymbol{x}_k,\boldsymbol{y}_k )+\boldsymbol{J}_{\boldsymbol{x}}(\boldsymbol{x}_k,\boldsymbol{y}_k )\Delta \boldsymbol{x})$. 
This is what Hong et al. derived in \cite{hong:17}. It is not an exact update for $\boldsymbol{y}$, but rather an approximate analytical expression based on a first-order linearization.

In the actual implementation of VarPro, we do not explicitly compute $\boldsymbol{y}$ because it has already been eliminated. However, after obtaining $\boldsymbol{x}_{k+1}$, there will be an implicitly corresponding $\boldsymbol{y}_{k+1}$. Because the $\boldsymbol{y}$-subproblem 
\begin{equation}
    \|\boldsymbol\epsilon(\boldsymbol{x}_{k+1}, \boldsymbol{y})\|_2^2 = \frac{1}{2}\|\boldsymbol G(\boldsymbol{x}_{k+1})\boldsymbol{y} - \boldsymbol z\|_2^2
\end{equation}
is linear, a single Newton/least-squares step finds the optimum in one shot. Thus, an exact update from $\boldsymbol{y}_k$ to $\boldsymbol{y}_{k+1}$ is 
\begin{equation}
   \Delta \boldsymbol{y} = - \boldsymbol{J}_{\boldsymbol{y}}(\boldsymbol{x}_{k+1})^{\dagger}\boldsymbol\epsilon(\boldsymbol{x}_{k+1},\boldsymbol{y}_k ).
\end{equation}

\section{}
\textbf{Scenario:} The $\boldsymbol{y}$-component of the original critical point is not the minimizer of the inner problem. This occurs when $\boldsymbol{y}_0$ is a stationary point of $F(\boldsymbol{x}_0,\cdot)$, but not the point that defines $\boldsymbol{y}^*(\boldsymbol{x}_0)$. 

Here we provide an example to illustrate how a critical point $(\boldsymbol{x}_c, \boldsymbol{y}_c)$ of the original problem is "filtered out" by the variable elimination process if it does not satisfy the condition $\boldsymbol{y}_c = \boldsymbol{y}^{*}(\boldsymbol{x}_c)$.
Consider the function:
\begin{equation}
    F(x, y) = xy + \frac{1}{4}y^4 - y^3
\end{equation}
\textit{Note: To construct an example where a critical point exists off the optimal submanifold (i.e., $\boldsymbol{y}_c \neq \boldsymbol{y}^{*}(\boldsymbol{x}_c)$), the inner problem with respect to $\boldsymbol{y}$ must have at least two stationary points. Consequently, it must be at least a cubic polynomial in $\boldsymbol{y}$ and cannot be a simple quadratic, which is the typical tractable case for VarPro. This example is chosen for its theoretical clarity in demonstrating the filtering mechanism.}

We proceed with a step-by-step analysis:

\textbf{1. Find the critical points of $F(x,y)$.}
\begin{itemize}
    \item $\nabla_x F(x,y) = y = 0$
    \item $\nabla_y F(x,y) = x + y^3 - 3y^2 = 0$
\end{itemize}
Therefore, the function has a unique critical point at $(x_c, y_c) = (0,0)$.

\textbf{2. Analyze the inner problem at $x_c = 0$.}

We seek to solve $\min_y F(0, y)$. The objective for the inner problem is:
$$ F(0, y) = \frac{1}{4}y^4 - y^3 $$
To find its minimizer, we compute the derivative with respect to $y$:
$$ \nabla_y F(0, y) = y^3 - 3y^2 = y^2(y - 3) $$
Setting the derivative to zero gives two stationary points for the inner problem: $y=0$ and $y=3$. By analyzing the function's behavior (it tends to $+\infty$ as $|y| \to \infty$), we find that $y=3$ is the **unique global minimizer**. Thus, we have $\boldsymbol{y}^{*}(0) = 3$.

\textbf{3. Verify the key condition.}

We compare the y-component of the critical point with the solution to the inner problem:
\begin{itemize}
    \item The original critical point is $(x_c, y_c) = (0,0)$.
    \item The unique optimal solution to the inner problem at $x_c=0$ is $y^*(0) = 3$.
\end{itemize}
Clearly, $y_c \neq y^*(x_c)$, as $0 \neq 3$. The critical point $(0,0)$ does not lie on the optimal submanifold $\mathcal{M}$.

\textbf{4. Check the gradient of the reduced function.}

The gradient of the reduced function $\tilde{F}(x)$ is given by $\nabla\tilde{F}(x) = \nabla_x F(x, y^*(x))$. We evaluate this at the point of interest, $x_c=0$:
$$ \nabla\tilde{F}(0) = \nabla_x F(0, y^*(0)) = \nabla_x F(0, 3) $$
Since $\nabla_x F(x,y) = y$, we have:
$$ \nabla\tilde{F}(0) = y \Big|_{y=3} = 3.$$
Because $\nabla\tilde{F}(0) = 3 \neq 0$, the point $x_c=0$ is \textbf{not a critical point} of the reduced problem $\tilde{F}(x)$.

\textbf{Further Insight:}

It is noteworthy that in this example, the point $(0,0)$ is the \textit{only} critical point of the original problem $F(x,y)$. Our analysis shows that this single critical point—which is a saddle point—is filtered out by the variable elimination process, resulting in a reduced problem $\tilde{F}(x)$ that has **no critical points at all**. This is not a failure of the method. On the contrary, the original problem $F(x,y)$ is unbounded and has no minima. The reduced problem correctly reflects this by presenting a landscape with no stationary points (a monotonic function in this case), effectively signaling to any optimization algorithm that no local minimum exists. This provides a stark and powerful illustration of how variable elimination simplifies the landscape by removing even unique, yet undesirable, stationary points.

\section{}

\begin{theorem}
\label{thm:minima_correspondence}
Let $F: \mathbb{R}^p \times \mathbb{R}^q \to \mathbb{R}$ be a $C^2$ function, and assume that for each $\boldsymbol{x} \in \mathbb{R}^p$, the inner problem $\min_{\boldsymbol{y} \in \mathbb{R}^q} F(\boldsymbol{x},\boldsymbol{y})$ has a unique solution $\boldsymbol{y}^{*}(\boldsymbol{x})$, and that $\boldsymbol{y}^{*}(\boldsymbol{x})$ is a $C^1$ function. Furthermore, assume that the Hessian matrix of the inner problem, $\nabla_{\boldsymbol{y}\boldsymbol{y}}^2 F(\boldsymbol{x},\cdot)$, is positive definite.

(i) If $(\boldsymbol{x}_0, \boldsymbol{y}_0)$ is a local minimum of the original problem $F(\boldsymbol{x}, \boldsymbol{y})$ where $\boldsymbol{y}_0 = \boldsymbol{y}^*(\boldsymbol{x}_0)$, then $\boldsymbol{x}_0$ is a local minimum of the reduced problem $\tilde{F}(\boldsymbol{x})$.

(ii) If $\boldsymbol{x}_0$ is a local minimum of the reduced problem $\tilde{F}(\boldsymbol{x})$, then $(\boldsymbol{x}_0, \boldsymbol{y}_0)$ is a local minimum of the original problem $F(\boldsymbol{x}, \boldsymbol{y})$, where $\boldsymbol{y}_0 = \boldsymbol{y}^*(\boldsymbol{x}_0)$.
\end{theorem}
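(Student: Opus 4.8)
The plan is to prove both directions by a direct, definitional argument rather than through the Schur-complement machinery of Theorem~4, exactly as anticipated in the remark following that theorem. The only analytic inputs I will need are the continuity of $\boldsymbol{y}^*$, furnished by the $C^1$ hypothesis, and the fact that $\boldsymbol{y}^*(\boldsymbol{x})$ is the \emph{global} minimizer of $F(\boldsymbol{x},\cdot)$ over all of $\mathbb{R}^q$ for each fixed $\boldsymbol{x}$. The positive-definiteness of $\nabla_{\boldsymbol{y}\boldsymbol{y}}^2 F$ and the $C^2$ regularity serve only to guarantee that $\boldsymbol{y}^*$ is well-defined and continuous; the optimality conclusions themselves will follow from elementary inequalities.

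For part (i), I would start from the hypothesis that $(\boldsymbol{x}_0,\boldsymbol{y}_0)$ is a local minimum of $F$, so there is a radius $\delta>0$ with $F(\boldsymbol{x},\boldsymbol{y}) \ge F(\boldsymbol{x}_0,\boldsymbol{y}_0)$ whenever $\|(\boldsymbol{x},\boldsymbol{y})-(\boldsymbol{x}_0,\boldsymbol{y}_0)\| < \delta$. The key step is to pull this neighborhood back along the section $\boldsymbol{x} \mapsto (\boldsymbol{x}, \boldsymbol{y}^*(\boldsymbol{x}))$: since $\boldsymbol{y}^*$ is continuous and $\boldsymbol{y}^*(\boldsymbol{x}_0)=\boldsymbol{y}_0$, there is $\eta>0$ such that $\|\boldsymbol{x}-\boldsymbol{x}_0\|<\eta$ forces $(\boldsymbol{x},\boldsymbol{y}^*(\boldsymbol{x}))$ to lie inside the $\delta$-ball. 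On that smaller neighborhood I can then write $\tilde{F}(\boldsymbol{x}) = F(\boldsymbol{x},\boldsymbol{y}^*(\boldsymbol{x})) \ge F(\boldsymbol{x}_0,\boldsymbol{y}_0) = \tilde{F}(\boldsymbol{x}_0)$, establishing that $\boldsymbol{x}_0$ is a local minimum of $\tilde{F}$. Geometrically this is just the statement that the restriction of $F$ to the graph submanifold $\mathcal{M}$ inherits the local minimum.

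For part (ii), I would exploit the global nature of the inner minimization, which makes this direction even cleaner. Assuming $\boldsymbol{x}_0$ is a local minimum of $\tilde{F}$, there is $\eta>0$ with $\tilde{F}(\boldsymbol{x}) \ge \tilde{F}(\boldsymbol{x}_0)$ for $\|\boldsymbol{x}-\boldsymbol{x}_0\|<\eta$. Then for \emph{any} $\boldsymbol{x}$ in this ball and \emph{every} $\boldsymbol{y}\in\mathbb{R}^q$, I chain two inequalities: the inner-optimality bound $F(\boldsymbol{x},\boldsymbol{y}) \ge F(\boldsymbol{x},\boldsymbol{y}^*(\boldsymbol{x})) = \tilde{F}(\boldsymbol{x})$, valid for all $\boldsymbol{y}$ precisely because $\boldsymbol{y}^*(\boldsymbol{x})$ is the unique global minimizer, followed by $\tilde{F}(\boldsymbol{x}) \ge \tilde{F}(\boldsymbol{x}_0) = F(\boldsymbol{x}_0,\boldsymbol{y}_0)$. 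Concatenating yields $F(\boldsymbol{x},\boldsymbol{y}) \ge F(\boldsymbol{x}_0,\boldsymbol{y}_0)$ on the product neighborhood $\{\|\boldsymbol{x}-\boldsymbol{x}_0\|<\eta\}\times\mathbb{R}^q$, certifying $(\boldsymbol{x}_0,\boldsymbol{y}_0)$ as a local minimum of $F$ that is in fact global in the $\boldsymbol{y}$-directions.

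The main obstacle lies not in either inequality chain, both of which are short, but in correctly coordinating neighborhoods in part (i): I must ensure the $\eta$-ball in $\boldsymbol{x}$ is small enough that the continuous section stays inside the original $\delta$-ball of the full space, which is exactly where continuity of $\boldsymbol{y}^*$ is indispensable and where a purely algebraic argument would fail. I would close by noting that this proof is strictly more general than the Hessian-based proof of Theorem~4, since it never invokes positive-definiteness of the full Hessian $\boldsymbol{H}_F$ and therefore also covers degenerate minima that the second-order Schur-complement criterion cannot detect.
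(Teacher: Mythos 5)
Your proposal is correct and follows essentially the same route as the paper's own proof in Appendix~C: part (i) by restricting $F$ to the section $\boldsymbol{x}\mapsto(\boldsymbol{x},\boldsymbol{y}^*(\boldsymbol{x}))$ and pulling the full-space neighborhood back (your explicit use of continuity of $\boldsymbol{y}^*$ makes rigorous what the paper states only parenthetically), and part (ii) by chaining the reduced-minimum inequality with the inner-optimality inequality to get a minimum over $\{\|\boldsymbol{x}-\boldsymbol{x}_0\|<\eta\}\times\mathbb{R}^q$. Your closing observation that the argument never uses positive definiteness of $\boldsymbol{H}_F$ likewise mirrors the paper's own remark following Theorem~4.
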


\begin{proof}

\textbf{Proof of (i) (Original Problem $\Rightarrow$ Reduced Problem):}

The variable elimination method essentially constrains the optimization of $F(\boldsymbol{x}, \boldsymbol{y})$ to a $p$-dimensional submanifold $\mathcal{M} \subset \mathbb{R}^{p+q}$ defined by the mapping $\boldsymbol{y} = \boldsymbol{y}^*(\boldsymbol{x})$. The reduced function $\tilde{F}(\boldsymbol{x}) = F(\boldsymbol{x}, \boldsymbol{y}^*(\boldsymbol{x}))$ is precisely the original function $F$ restricted to this submanifold $\mathcal{M}$.

By the definition of a local minimum, if $(\boldsymbol{x}_0, \boldsymbol{y}_0)$ is a local minimum of $F$, then there exists a neighborhood $\mathcal{N} \subset \mathbb{R}^{p+q}$ around $(\boldsymbol{x}_0, \boldsymbol{y}_0)$ such that for all $(\boldsymbol{x}, \boldsymbol{y}) \in \mathcal{N}$:
\[
F(\boldsymbol{x}_0, \boldsymbol{y}_0) \le F(\boldsymbol{x}, \boldsymbol{y}).
\]
Since $\boldsymbol{y}_0 = \boldsymbol{y}^*(\boldsymbol{x}_0)$, the point $(\boldsymbol{x}_0, \boldsymbol{y}^*(\boldsymbol{x}_0))$ lies on the submanifold $\mathcal{M}$. The set of points on the submanifold $\mathcal{M}$ within a sufficiently small neighborhood of $(\boldsymbol{x}_0, \boldsymbol{y}_0)$ is a subset of the points in the full-space neighborhood $\mathcal{N}$. Therefore, the inequality must also hold for this subset. This implies that for all $\boldsymbol{x}$ in a neighborhood of $\boldsymbol{x}_0$ (such that $(\boldsymbol{x}, \boldsymbol{y}^*(\boldsymbol{x})) \in \mathcal{N}$), we have:
\[
F(\boldsymbol{x}_0, \boldsymbol{y}^*(\boldsymbol{x}_0)) \le F(\boldsymbol{x}, \boldsymbol{y}^*(\boldsymbol{x})).
\]
By the definition of the reduced function, this is equivalent to $\tilde{F}(\boldsymbol{x}_0) \le \tilde{F}(\boldsymbol{x})$. This is precisely the definition of $\boldsymbol{x}_0$ as a local minimum of $\tilde{F}(\boldsymbol{x})$.

\textbf{Proof of (ii) (Reduced Problem $\Rightarrow$ Original Problem):}
This proof is based directly on the definition of a local minimum and holds true even in the degenerate case where the Hessians are only positive semi-definite.

    Since $\boldsymbol{x}^*$ is a local minimum of $\tilde{F}(\boldsymbol{x})$, by definition, there exists a neighborhood $\mathcal{N}_x$ around $\boldsymbol{x}^*$ such that for all $\boldsymbol{x} \in \mathcal{N}_x$, the following inequality holds:
    \[
    \tilde{F}(\boldsymbol{x}^*) \le \tilde{F}(\boldsymbol{x})
    \]
    Substituting the definition of $\tilde{F}(\boldsymbol{x})$ yields:
    \begin{equation} \label{eq:reduced_min}
    F(\boldsymbol{x}^*, \boldsymbol{y}^*(\boldsymbol{x}^*)) \le F(\boldsymbol{x}, \boldsymbol{y}^*(\boldsymbol{x}))
    \end{equation}

    By the definition of variable elimination, $\boldsymbol{y}^*(\boldsymbol{x})$ is the solution to the inner problem $\min_{\boldsymbol{y}} F(\boldsymbol{x}, \boldsymbol{y})$. This means that for any given $\boldsymbol{x}$ and for any choice of $\boldsymbol{y}$, the following inequality always holds:
    \begin{equation} \label{eq:inner_min}
    F(\boldsymbol{x}, \boldsymbol{y}^*(\boldsymbol{x})) \le F(\boldsymbol{x}, \boldsymbol{y})
    \end{equation}

    Now, we connect these two inequalities. For any point $(\boldsymbol{x}, \boldsymbol{y})$ in a neighborhood of $(\boldsymbol{x}^*, \boldsymbol{y}^*(\boldsymbol{x}^*))$ (where $\boldsymbol{x} \in \mathcal{N}_x$), we can form a chain using \eqref{eq:reduced_min} and \eqref{eq:inner_min}:
    \[
    F(\boldsymbol{x}^*, \boldsymbol{y}^*(\boldsymbol{x}^*)) \stackrel{\text{by \eqref{eq:reduced_min}}}{\le} F(\boldsymbol{x}, \boldsymbol{y}^*(\boldsymbol{x})) \stackrel{\text{by \eqref{eq:inner_min}}}{\le} F(\boldsymbol{x}, \boldsymbol{y})
    \]
    This chain of inequalities directly gives us the result:
    \[
    F(\boldsymbol{x}^*, \boldsymbol{y}^*(\boldsymbol{x}^*)) \le F(\boldsymbol{x}, \boldsymbol{y})
    \]
    This inequality shows that within a neighborhood of $(\boldsymbol{x}^*, \boldsymbol{y}^*(\boldsymbol{x}^*))$, the function value at this point is a minimum. This is precisely the definition of a local minimum for the original problem $F(\boldsymbol{x}, \boldsymbol{y})$.

\end{proof}

\vskip 0.2in
\bibliography{sample}

\begin{thebibliography}{44}
\providecommand{\natexlab}[1]{#1}
\providecommand{\url}[1]{\texttt{#1}}
\expandafter\ifx\csname urlstyle\endcsname\relax
  \providecommand{\doi}[1]{doi: #1}\else
  \providecommand{\doi}{doi: \begingroup \urlstyle{rm}\Url}\fi

\bibitem[wik({\natexlab{a}})]{wiki:rosenbrock}
{R}osenbrock function --- {W}ikipedia, {T}he {F}ree {E}ncyclopedia.
\newblock \url{https://en.wikipedia.org/wiki/Rosenbrock_function}, {\natexlab{a}}.
\newblock Accessed: 2025-10-09.

\bibitem[wik({\natexlab{b}})]{wiki:schur}
{S}chur complement --- {W}ikipedia, {T}he {F}ree {E}ncyclopedia.
\newblock \url{https://en.wikipedia.org/wiki/Schur_complement}, {\natexlab{b}}.
\newblock Accessed: 2025-10-09.

\bibitem[Aravkin and Leeuwen(2012)]{aravkin:12}
A.~Y. Aravkin and T.~Van Leeuwen.
\newblock Estimating nuisance parameters in inverse problems.
\newblock \emph{Inverse Problems}, 28\penalty0 (11):\penalty0 1--16, 2012.

\bibitem[Aravkin et~al.(2017)Aravkin, Drusvyatskiy, and van Leeuwen]{aravkin:17}
A.~Y. Aravkin, D.~Drusvyatskiy, and T.~van Leeuwen.
\newblock Efficient quadratic penalization through the partial minimization technique.
\newblock \emph{IEEE Transactions on Automatic Control}, 63\penalty0 (7):\penalty0 2131--2138, 2017.

\bibitem[Chen et~al.(2019)Chen, Gan, Chen, and Li]{chen:19}
G.-Y. Chen, M.~Gan, C.~L.~P. Chen, and H.-X. Li.
\newblock A regularized variable projection algorithm for separable nonlinear least squares problems.
\newblock \emph{IEEE Transactions on Automatic Control}, 64\penalty0 (2):\penalty0 526--537, 2019.

\bibitem[Chen et~al.(2021)Chen, Gan, q.~Wang, and Chen]{chen:21}
G.-Y. Chen, M.~Gan, S.~q.~Wang, and C.~L.~P. Chen.
\newblock Insights into algorithms of separable nonlinear least squares problems.
\newblock \emph{IEEE Transactions on Image Processing}, 30:\penalty0 1207--1218, 2021.

\bibitem[Chen et~al.(2025)Chen, Xue, Gan, Chen, Guo, and Chen]{chen:25}
G.-Y. Chen, P.~Xue, M.~Gan, J.~Chen, W.-Z. Guo, and C.~L.~P. Chen.
\newblock Variable projection algorithms: Theoretical insights and a novel approach for problems with large residual.
\newblock \emph{Automatica}, 177:\penalty0 112300, 2025.

\bibitem[Chi et~al.(2019)Chi, Lu, and Chen]{chi:19}
Y.~Chi, Y.~M. Lu, and Y.~Chen.
\newblock Nonconvex optimization meets low-rank matrix factorization: An overview.
\newblock \emph{IEEE Transactions on Signal Processing}, 67\penalty0 (20):\penalty0 5239--5269, 2019.

\bibitem[Choromanska et~al.(2015)Choromanska, Henaff, Mathieu, Arous, and LeCun]{choromanska:15}
A.~Choromanska, M.~Henaff, M.~Mathieu, G.~Ben Arous, and Y.~LeCun.
\newblock The loss surfaces of multilayer networks.
\newblock In \emph{Proceedings of the Eighteenth International Conference on Artificial Intelligence and Statistics (AISTATS)}, volume~38 of \emph{JMLR Workshop and Conference Proceedings}, pages 192--204, 2015.

\bibitem[Chung and Nagy(2010)]{chung:10}
J.~Chung and J.~G. Nagy.
\newblock An efficient iterative approach for large-scale separable nonlinear inverse problems.
\newblock \emph{SIAM Journal on Scientific Computing}, 31\penalty0 (6):\penalty0 4654--4674, 2010.

\bibitem[Cyr et~al.(2020)Cyr, Karniadakis, Stinis, and Miller]{cyr:20}
E.~C. Cyr, G.~E. Karniadakis, P.~Stinis, and C.~T. Miller.
\newblock Robust training and initialization of deep neural networks: An adaptive basis viewpoint.
\newblock In \emph{Proceedings of the 2nd Annual Conference on Mathematical and Scientific Machine Learning}, volume 107 of \emph{Proceedings of Machine Learning Research}, pages 215--241. PMLR, 2020.

\bibitem[Dauphin et~al.(2014)Dauphin, Pascanu, Gulcehre, Cho, Ganguli, and Bengio]{dauphin:14}
Y.~N. Dauphin, R.~Pascanu, C.~Gulcehre, K.~Cho, S.~Ganguli, and Y.~Bengio.
\newblock Identifying and attacking the saddle point problem in high-dimensional non-convex optimization.
\newblock In \emph{Advances in Neural Information Processing Systems 27 (NIPS)}, pages 2933--2941, 2014.

\bibitem[Demmel et~al.(2021)Demmel, Sommer, Cremers, and Usenko]{demmel:21}
N.~Demmel, C.~Sommer, D.~Cremers, and V.~Usenko.
\newblock Square root bundle adjustment for large-scale reconstruction.
\newblock In \emph{Proceedings of the IEEE/CVF Conference on Computer Vision and Pattern Recognition (CVPR)}, pages 11723--11732, 2021.

\bibitem[Du et~al.(2018)Du, Hu, and Lee]{du:18}
S.~S. Du, W.~Hu, and J.~D. Lee.
\newblock Algorithmic regularization in learning deep homogeneous models: Layers are automatically balanced.
\newblock In \emph{Advances in Neural Information Processing Systems 31 (NeurIPS)}, pages 384--395, 2018.

\bibitem[Eriksson and van~den Hengel(2012)]{eriksson:12}
A.~Eriksson and A.~van~den Hengel.
\newblock Efficient computation of robust weighted low-rank matrix approximations using the {$L_1$} norm.
\newblock \emph{IEEE Transactions on Pattern Analysis and Machine Intelligence}, 34\penalty0 (9):\penalty0 1681--1690, 2012.

\bibitem[Espa{\~n}ol and Jeronimo(2025)]{espanol:25}
M.~I. Espa{\~n}ol and G.~Jeronimo.
\newblock Local convergence analysis of a variable projection method for regularized separable nonlinear inverse problems.
\newblock \emph{SIAM Journal on Matrix Analysis and Applications}, 46\penalty0 (2):\penalty0 858--878, 2025.

\bibitem[Gan et~al.(2018)Gan, Chen, Chen, and Chen]{gan:18}
M.~Gan, C.~L.~P. Chen, G.-Y. Chen, and L.~Chen.
\newblock On some separated algorithms for separable nonlinear least squares problems.
\newblock \emph{IEEE Transactions on Cybernetics}, 48\penalty0 (10):\penalty0 2866--2874, 2018.

\bibitem[Golub and Pereyra(1973)]{golub:73}
G.~H. Golub and V.~Pereyra.
\newblock The differentiation of pseudo-inverses and nonlinear least squares problems whose variables separate.
\newblock \emph{SIAM Journal on Numerical Analysis}, 10\penalty0 (2):\penalty0 413--432, 1973.

\bibitem[Golub and Pereyra(2003)]{golub:03}
G.~H. Golub and V.~Pereyra.
\newblock Separable nonlinear least squares: {T}he variable projection method and its applications.
\newblock \emph{Inverse Problems}, 19:\penalty0 R1--R26, 2003.

\bibitem[Haynsworth(1968)]{haynsworth:68}
E.~V. Haynsworth.
\newblock Determination of the inertia of a partitioned {H}ermitian matrix.
\newblock \emph{Linear Algebra and its Applications}, 1\penalty0 (1):\penalty0 73--81, 1968.

\bibitem[Hong et~al.(2017)Hong, Zach, and Fitzgibbon]{hong:17}
J.~H. Hong, C.~Zach, and A.~Fitzgibbon.
\newblock Revisiting the variable projection method for separable nonlinear least squares problems.
\newblock In \emph{Proceedings of the IEEE Conference on Computer Vision and Pattern Recognition (CVPR)}, pages 5939--5947, 2017.

\bibitem[Jin et~al.(2017)Jin, Ge, Netrapalli, Kakade, and Jordan]{jin:17}
C.~Jin, R.~Ge, P.~Netrapalli, S.~M. Kakade, and M.~I. Jordan.
\newblock How to escape saddle points efficiently.
\newblock In \emph{Proceedings of the 34th International Conference on Machine Learning}, volume~70 of \emph{Proceedings of Machine Learning Research}, pages 1724--1732. PMLR, 2017.

\bibitem[Kaufman(1975)]{kaufman:75}
L.~Kaufman.
\newblock A variable projection method for solving separable nonlinear least squares problems.
\newblock \emph{BIT Numerical Mathematics}, 15\penalty0 (1):\penalty0 49--57, 1975.

\bibitem[Kingma and Ba(2015)]{kingma:15}
D.~P. Kingma and J.~Ba.
\newblock Adam: {A} method for stochastic optimization.
\newblock In \emph{Proceedings of the 3rd International Conference on Learning Representations (ICLR)}, San Diego, CA, USA, 2015.

\bibitem[Li et~al.(2019)Li, Lu, Arora, Haupt, Liu, Wang, and Zhao]{li:19}
X.~Li, J.~Lu, R.~Arora, J.~Haupt, H.~Liu, Z.~Wang, and T.~Zhao.
\newblock Symmetry, saddle points, and global optimization landscape of nonconvex matrix factorization.
\newblock \emph{IEEE Transactions on Information Theory}, 65\penalty0 (10):\penalty0 6457--6473, 2019.

\bibitem[Liu et~al.(2021)Liu, Li, Wei, Zhou, and Zhao]{liu:21}
T.~Liu, Y.~Li, S.~Wei, E.~Zhou, and T.~Zhao.
\newblock Noisy gradient descent converges to flat minima for nonconvex matrix factorization.
\newblock In \emph{Proceedings of The 24th International Conference on Artificial Intelligence and Statistics}, volume 130 of \emph{Proceedings of Machine Learning Research}, pages 1891--1899. PMLR, 2021.

\bibitem[McLoone et~al.(1998)McLoone, Brown, Irwin, and Lightbody]{mcloone:02}
S.~McLoone, M.~D. Brown, G.~Irwin, and G.~Lightbody.
\newblock A hybrid linear/nonlinear training algorithm for feedforward neural networks.
\newblock \emph{IEEE Transactions on Neural Networks}, 9\penalty0 (4):\penalty0 669--684, 1998.

\bibitem[Mullen(2008)]{mullen:08}
K.~M. Mullen.
\newblock \emph{Separable nonlinear models: theory, implementation and applications in physics and chemistry}.
\newblock PhD thesis, Department of Physics and Astronomy, Vrije Universiteit Amsterdam, The Netherlands, 2008.

\bibitem[Newman et~al.(2021)Newman, Ruthotto, Hart, and van Bloemen~Waanders]{newman:21}
E.~Newman, L.~Ruthotto, J.~Hart, and B.~van Bloemen~Waanders.
\newblock Train like a ({V}ar){P}ro: Efficient training of neural networks with variable projection.
\newblock \emph{SIAM Journal on Mathematics of Data Science}, 3\penalty0 (4):\penalty0 1041--1066, 2021.

\bibitem[Newman et~al.(2022)Newman, Chung, Chung, and Ruthotto]{newman:22}
E.~Newman, J.~Chung, M.~Chung, and L.~Ruthotto.
\newblock slim{T}rain---{A} stochastic approximation method for training separable deep neural networks.
\newblock \emph{SIAM Journal on Scientific Computing}, 44\penalty0 (4):\penalty0 A2322--A2348, 2022.

\bibitem[Okatani and Deguchi(2007)]{okatani:07}
T.~Okatani and K.~Deguchi.
\newblock On the {W}iberg algorithm for matrix factorization in the presence of missing components.
\newblock \emph{International Journal of Computer Vision}, 72\penalty0 (3):\penalty0 329--337, 2007.

\bibitem[Okatani et~al.(2011)Okatani, Yoshida, and Deguchi]{okatani:11}
T.~Okatani, T.~Yoshida, and K.~Deguchi.
\newblock Efficient algorithm for low-rank matrix factorization with missing components and performance comparison of latest algorithms.
\newblock In \emph{Proceedings of the IEEE International Conference on Computer Vision (ICCV)}, pages 842--849, Barcelona, Spain, 2011.

\bibitem[O'Leary and Rust(2013)]{oleary:13}
D.~P. O'Leary and B.~W. Rust.
\newblock Variable projection for nonlinear least squares problems.
\newblock \emph{Computational Optimization and Applications}, 54\penalty0 (3):\penalty0 579--593, 2013.

\bibitem[Ouellette(1981)]{ouellette:81}
D.~V. Ouellette.
\newblock Schur complements and statistics.
\newblock \emph{Linear Algebra and its Applications}, 36:\penalty0 187--295, 1981.

\bibitem[Pereyra et~al.(2006)Pereyra, Scherer, and Wong]{pereyra:06}
V.~Pereyra, G.~Scherer, and F.~Wong.
\newblock Variable projections neural network training.
\newblock \emph{Mathematics and Computers in Simulation}, 73:\penalty0 231--243, 2006.

\bibitem[Ruhe and Wedin(1980)]{ruhe:80}
A.~Ruhe and P.~{\AA}. Wedin.
\newblock Algorithms for separable nonlinear least squares problems.
\newblock \emph{SIAM Review}, 22\penalty0 (3):\penalty0 318--337, 1980.

\bibitem[Shearer and Gilbert(2013)]{shearer:13}
P.~Shearer and A.~C. Gilbert.
\newblock A generalization of variable elimination for separable inverse problems beyond least squares.
\newblock \emph{Inverse Problems}, 29\penalty0 (4), 2013.

\bibitem[Sj{\"o}berg and Viberg(1997)]{sjoberg:97}
J.~Sj{\"o}berg and M.~Viberg.
\newblock Separable non-linear least-squares minimization---{P}ossible improvements for neural net fitting.
\newblock In \emph{Proceedings of the IEEE Workshop on Neural Networks for Signal Processing}, pages 345--354, Amelia Island, FL, USA, 1997.

\bibitem[Strelow et~al.(2015)Strelow, Wang, Si, and Eriksson]{strelow:15}
D.~Strelow, Q.~Wang, L.~Si, and A.~Eriksson.
\newblock General, nested, and constrained {W}iberg minimization.
\newblock \emph{IEEE Transactions on Pattern Analysis and Machine Intelligence}, 38\penalty0 (9):\penalty0 1803--1815, 2015.

\bibitem[Valavi et~al.(2020)Valavi, Liu, and Ramadge]{valavi:20}
H.~Valavi, S.~Liu, and P.~J. Ramadge.
\newblock Revisiting the landscape of matrix factorization.
\newblock In \emph{Proceedings of the Twenty Third International Conference on Artificial Intelligence and Statistics}, volume 108 of \emph{Proceedings of Machine Learning Research}, pages 1629--1638. PMLR, 2020.

\bibitem[Weber et~al.(2023)Weber, Demmel, Chan, and Cremers]{weber:23}
S.~Weber, N.~Demmel, T.~C. Chan, and D.~Cremers.
\newblock Power bundle adjustment for large-scale 3d reconstruction.
\newblock In \emph{Proceedings of the IEEE/CVF Conference on Computer Vision and Pattern Recognition (CVPR)}, pages 12790--12800, 2023.

\bibitem[Wiberg(1976)]{wiberg:76}
T.~Wiberg.
\newblock Computation of principal components when data are missing.
\newblock In \emph{Proceedings of the Second Symposium on Computational Statistics}, pages 229--236, 1976.

\bibitem[Zhu et~al.(2018)Zhu, Li, Tang, and Wakin]{zhu:18}
Z.~Zhu, Q.~Li, G.~Tang, and M.~B. Wakin.
\newblock Global optimality in low-rank matrix optimization.
\newblock \emph{IEEE Transactions on Signal Processing}, 66\penalty0 (13):\penalty0 3614--3628, 2018.

\bibitem[Zhu et~al.(2019)Zhu, Li, Yang, Tang, and Wakin]{zhu:19}
Z.~Zhu, Q.~Li, X.~Yang, G.~Tang, and M.~B. Wakin.
\newblock Distributed low-rank matrix factorization with exact consensus.
\newblock In \emph{Advances in Neural Information Processing Systems 32 (NeurIPS)}, Vancouver, Canada, 2019.

\end{thebibliography}

\end{document}